\documentclass[journal]{IEEEtran}
\usepackage{amsmath}
\usepackage{amsthm}
\usepackage{amsfonts}
\usepackage{dsfont}
\usepackage{enumitem}
\usepackage{graphicx}
\usepackage{multirow}
\usepackage{subfigure}
\usepackage{algorithm}
\usepackage{algorithmic}
\usepackage{color}
\usepackage{cite}
\usepackage{array}
\newcommand{\PreserveBackslash}[1]{\let\temp=\\#1\let\\=\temp}
\newcolumntype{C}[1]{>{\PreserveBackslash\centering}p{#1}}
\newcolumntype{R}[1]{>{\PreserveBackslash\raggedleft}p{#1}}
\newcolumntype{L}[1]{>{\PreserveBackslash\raggedright}p{#1}}

\newtheorem{myDef}{Definition}
\newtheorem{myTheo}{Theorem}

\ifCLASSINFOpdf
\else
\fi

\hyphenation{op-tical net-works semi-conduc-tor}

\begin{document}
%
\title{Robust Subspace Clustering by \\Cauchy Loss Function}
%
%
%

\author{Xuelong Li,~\IEEEmembership{Fellow,~IEEE}, Quanmao Lu, Yongsheng Dong,~\IEEEmembership{Member,~IEEE}, and Dacheng Tao,~\IEEEmembership{Fellow,~IEEE} 
\thanks{This work was supported in part by The National Key Research and Development Program of China under Grant 2018YFB1107400,
in part by the National Natural Science Foundation of China under Grants 61871470, 61761130079, U1604153,and 61301230,
in part by the Program for Science and Technology Innovation Talents in Universities of Henan Province under Grant 19HASTIT026,
and in part by the Training Program for the Young-Backbone Teachers in Universities of Henan Province under Grant 2017GGJS065. (Corresponding author: Yongsheng Dong.)

X. Li, Q. Lu, and Y. Dong are with the Xi'an Institute of Optics and Precision Mechanics, Chinese Academy of Sciences, Xi'an 710119, Shaanxi, P. R. China (emails: xuelong\_li@opt.ac.cn, quanmao.lu.opt@gmail.com,
dongyongsheng98@163.com). Y. Dong is also with the School of Information Engineering, Henan University of Science and Technology, Luoyang 471023, Henan, P. R. China.

D. Tao is with the UBTech Sydney Artificial Intelligence Institute and
the School of Information Technologies in the Faculty of Engineering and
Information Technologies, The University of Sydney, Darlington NSW 2008,
Australia (e-mail: dacheng.tao@gmail.com).}
}

%
%

\markboth{Transactions on Neural Networks and Learning Systems}%
{Shell \MakeLowercase{\textit{et al.}}: Bare Demo of IEEEtran.cls for Journals}
\maketitle

\begin{abstract}

Subspace clustering is a problem of exploring the low-dimensional subspaces of high-dimensional data. State-of-the-arts approaches are designed by following the model of spectral clustering based method. These methods pay much attention to learn the representation matrix to construct a suitable similarity matrix and overlook the influence of the noise term on subspace clustering. However, the real data are always contaminated by the noise and the noise usually has a complicated statistical distribution. To alleviate this problem, we in this paper propose a subspace clustering method based on Cauchy loss function (CLF). Particularly, it uses CLF to penalize the noise term for suppressing the large noise mixed in the real data. This is due to that the CLF's influence function has a upper bound which can alleviate the influence of a single sample, especially the sample with a large noise, on estimating the residuals. Furthermore, we theoretically prove the grouping effect of our proposed method, which means that highly correlated data can be grouped together. Finally, experimental results on five real datasets reveal that our proposed method outperforms  several representative clustering methods.
\end{abstract}

\begin{IEEEkeywords}
Subspace clustering, Cauchy loss function, noise suppression, grouping effect, similarity matrix.
\end{IEEEkeywords}

%
\IEEEpeerreviewmaketitle

\section{Introduction}
\IEEEPARstart{S}ubspace clustering, as an important clustering analysis technique, has gained much attention in recent years and has numerous applications in image processing and computer vision, \emph{e.g.} image representation \cite{hong2006multiscale}, motion segmentation \cite{yan2006general}, saliency detection \cite{lang2012saliency} and image clustering \cite{ho2003clustering, Cui2018Subspace}. It aims to explore the low dimensional structure lying in the high-dimensional data. Particularly, conventional PCA \cite{Smith2002A} can be regarded as a special subspace clustering method which finds a single low-dimensional subspace of the high-dimensional data. However, in practice, data are always drawn from multiple low-dimensional subspaces and each subspace has different dimension. For example, the trajectories of different motion objects usually belong to different affine subspaces, or face images of individuals under varying pose may lie in different linear subspaces. Motivated by these, subspace clustering is designed for seeking the low-dimensional subspace of the raw data and clustering the data into groups with each group fitting a subspace. Furthermore, subspace clustering problem is formally defined as follow:
\begin{myDef}
(Subspace clustering) Given a set of sufficiently sampled data vectors ${\bf{X}} = [{{\bf{X}}_1},...,{{\bf{X}}_k}] = [{{\bf{x}}_1},...,{{\bf{x}}_n}] \in {\mathds{R}^{d \times n}}$, where $d$ represents the feature dimension and $n$ is the number of data. Assume that the data are drawn from a union of $k$ subspaces $\{ {S_i}\} _{i = 1}^k$, and $X_i$ be a collection of $n_i$ points drawn from the subspace $S_i$, $n = \sum\nolimits_{i = 1}^k {{n_i}}$. The task of subspace clustering is to segment the data according to the underlying subspaces they are drawn from.
\end{myDef}

In the past two decades, many advances have been done to improve the performance of subspace clustering \cite{7312458,tipping1999mixtures,7046379,bradley2000k,favaro2011closed,6803941,DBLP:conf/ccpr/LuLDT16}. They can be roughly divided into four categories, including algebraic methods \cite{vidal2005generalized, costeira1998multibody}, iterative methods \cite{zhang2009median,tseng2000nearest}, statistical methods \cite{ma2007segmentation,rao2008motion} and spectral clustering based methods \cite{you2016scalable,liu2010robust,elhamifar2009sparse,liu2011latent,7892917}. Most recently, spectral clustering based methods have shown its excellent performance in many applications.
In general, spectral clustering based methods are consisted of two main steps. Firstly, a similarity or affinity matrix is constructed to represent the similarity between the samples in the raw data. Secondly, a spectral clustering algorithm is employed to divide the raw data into $k$ groups based on the learned similarity matrix. Note that, how to build a proper similarity matrix plays a decisive role in the process of subspace clustering. So most spectral clustering based models were proposed to construct a more efficient similarity matrix.

Reviewing the existing methods, a similarity matrix is generally constructed using a self-expression model which regards the data itself as a dictionary to learn a representation matrix \cite{parsons2004subspace, Lu2018Structure}. Such a self-expression model assumes that the samples can be well represented using the points in the same subspace and the learned representation matrix can capture the similarity between the samples in the raw data. Ideally, the learned representation matrix should be block-diagonal \cite{lu2012robust,tang2016subspace}, which means the affinities of samples between cluster are all zeros. Considering the real data usually contain noise, a loss function is employed to deal with the noise.
Then the general model of spectral clustering based methods can be formulated as

\begin{equation}\label{1}
\begin{array}{*{20}{l}}
{\mathop {\min }\limits_{{\bf{Z}},{\bf{E}}} \varphi ({\bf{E}}) + \delta ({\bf{Z}})}\\
{s.t.{\kern 1pt} {\kern 1pt} {\kern 1pt} {\kern 1pt} {\bf{X}} = {\bf{XZ}} + {\bf{E}},}
\end{array}
\end{equation}
where $\bf{X}$ is the original data matrix, $\bf{Z}$ is the representation matrix and $\bf{E}$ represents the noise matrix. The functions of $\varphi ( \bf{E} )$ and $\delta ( \bf{Z} )$ are designed for restricting $\bf{E}$ and $\bf{Z}$ respectively. In many works, $\varphi ( \bf{E} )$ and $\delta ( \bf{Z} )$ are two properly norms. For example, Sparse Subspace Clustering (SSC) \cite{elhamifar2009sparse} uses $\ell_1$ norm to regularize the matrix $\bf{Z}$ for seeking the most sparsest representation of each point and chooses Frobenius norm to deal with the noise term $\bf{E}$. Different with SSC, Low-Rank Representation (LRR) \cite{liu2013robust} employs the nuclear norm to regularize the matrix $\bf{Z}$ for capturing the correlation structure of the data and uses $\ell_{21}$ norm to describe the matrix $\bf{E}$. Based on SSC and LRR, many works \cite{lu2013correlation, 7342943, jiang2016robust, li2015structured, 7460141, DBLP:conf/wacv/JiSL14} were proposed to design different regularizations for the representation matrix $\bf{Z}$ and choose a simple norm on the noise matrix $\bf{E}$.

Note that the previous works mainly focus on choosing a proper norm to regularize the representation matrix and ignore the influence of the noise term on subspace clustering. However, the real data are always contaminated by the unknown noise, and the noise usually has a complicated statistical distribution \cite{liu2013robust,li2015subspace,lee2015membership}. If we can't adopt a proper model to deal with the noise, the learned representation matrix may fail to capture the similarity between samples which can result in a unreliable subspace clustering result.
So how to handle the noise is a difficult task and has a significant influence on subspace clustering. Although the existing methods choose the different norm to handle the noise, they can only deal with the specific noise. For example, $\ell_1$ norm is suitable for entry-wise corruptions, $\ell_{21}$ norm is for sample-specific corruptions and Frobenius norm is to tackle Gaussian noise. Besides, Li et al. \cite{li2015subspace} tried to describe the noise using Mixture of Gaussian Regression (MoG Regression). Although it has shown its superiority through the comparison experiments, it is sensitive to the number of Gaussian and has high computational cost.

To alleviate the noise's effect on subspace clustering, we in this paper propose a subspace clustering method by using Cauchy loss function (CLF) to suppress the noise term. Compared with the conventional $\ell_1$ or $\ell_2$ loss, the influence function of CLF has a upper bound. So it can alleviate the influence of a single sample, especially the sample with a large noise, on estimating the residuals. Therefore, CLF has less dependence on the distribution of the noise and is more robust to the noise. Because our work mainly focuses on the noise term, we simply use the Frobenius norm to regularize the representation matrix. Furthermore, we prove the grouping effect of our method, which means that highly correlated data can be grouped together. Experimental results on the real datasets show the effectivness of our proposed method.

\subsection{Paper Contributions and Organization}

Our work has the following three main contributions.

\begin{enumerate}
\item We propose a robust subspace clustering method based on Cauchy loss function (CLF). Specifically, CLF is able to penalize the point with large noise rather than giving a specific assumption on the distribution of the noise. So our method is more robust to different kinds of the noise in the real data.
\item The grouping effect of our method is theoretically proved, which can preserve the local structure in the raw data. Therefore, highly correlated point can be grouped together in the low-dimensional subspace.
\item We verify our method on different real applications, including motion segmentation and image clustering. The experimental results show that our method achieves better performance than several representative methods.
\end{enumerate}

The rest of this paper is arranged as below: The related work are introduced in Section \ref{section 2}. Section \ref{section 3} gives the problem formulation and the whole framework of our subspace clustering algorithm. In Section \ref{section 4}, we prove the grouping effect of our method which is a very useful property for subspace clustering, and then analyze the convergence of our optimization algorithm. The experimental results on real databases are presented in Section \ref{section 5}. Finally, the paper is briefly concluded in Section \ref{section 6}.

\section{Related Work \label{section 2}}
Considering that our proposed method is a kind of spectral clustering based method, we mainly review the most recent and related works. Throughout the paper, we use the non-bold letters, bold lower case letters and bold upper case letters to represent scalars, vectors and matrices respectively.

Sparse Subspace Clustering (SSC) \cite{elhamifar2009sparse}, as a first proposed spectral clustering based method, aims to find the sparsest representation for each point with all other points in a union of subspaces by solving the following problem:
\begin{equation}\label{2}
\begin{array}{l}
\mathop {\min }\limits_{\bf{Z},\bf{E}} \left\| \bf{E} \right\|_F^2 + \lambda {\left\| \bf{Z} \right\|_0}\\
s.t.{\kern 1pt} {\kern 1pt} {\kern 1pt} {\kern 1pt} {\bf{X} = \bf{XZ} + \bf{E}},{\kern 1pt} {\kern 1pt} {\kern 1pt} diag({\bf{Z}}) = \bf{0},
\end{array}
\end{equation}
where $\lambda>0$ is a weighting factor to balance two terms. $diag(\bf{Z}) = 0$ is used to avoid the solution $\bf{Z}$ being an identity matrix, which means that one point can not be reconstructed using itself.  As we all known, solving such sparse representation is a NP hard problem. So SSC uses $\ell_1$ norm to approximate the $\ell_0$ norm. The final objective function is given below:
\begin{equation}\label{3}
\begin{array}{l}
\mathop {\min }\limits_{\bf{Z},\bf{E}} \left\| \bf{E} \right\|_F^2 + \lambda {\left\| \bf{Z} \right\|_1}\\
s.t.{\kern 1pt} {\kern 1pt} {\kern 1pt} {\kern 1pt} {\bf{X = XZ + E}},{\kern 1pt} {\kern 1pt} {\kern 1pt} diag(\bf{Z}) = 0.
\end{array}
\end{equation}
SSC assumes that one point can be reconstructed only using few points in the same subspace. When the data are drawn from independent subspaces, SSC can divide the points into their subspaces. But for the real data, the representation matrix of SSC may be too sparse to capture the relationship between points in the same subspace. Based on SSC, Wang and Xu \cite{DBLP:journals/jmlr/WangX16} proposed a modified version, named Noisy Sparse Subspace Clustering (NSSC), to deal with noisy data.

Low-Rank Representation (LRR) \cite{liu2013robust} was proposed to capture the correlation structure of the data by finding a low-rank representation of the samples instead of a sparse one. The original problem of LRR is formulated as
\begin{equation}\label{4}
\begin{array}{l}
\mathop {\min }\limits_{\bf{Z}} {\kern 1pt} {\kern 1pt} {\kern 1pt} {\kern 1pt} rank(\bf{Z})\\
s.t.{\kern 1pt} {\kern 1pt} {\kern 1pt} {\kern 1pt} \bf{X = XZ}.
\end{array}
\end{equation}
The above optimization problem is hard to be solved due to the discrete nature of the rank function. So LRR adopts the nuclear norm as a surrogate of the rank function. Furthermore, LRR uses $\ell_{21}$ norm to deal with the noise term for improving its robustness to the noise and outliers. The subspace clustering problem becomes
\begin{equation}\label{5}
\begin{array}{l}
\mathop {\min }\limits_{\bf{Z}} {\kern 1pt} {\kern 1pt} {\kern 1pt} {\kern 1pt} {\left\| \bf{E} \right\|_{21}} + \lambda {\left\| \bf{Z} \right\|_*}\\
s.t.{\kern 1pt} {\kern 1pt} {\kern 1pt} {\kern 1pt} \bf{X = XZ + E}.
\end{array}
\end{equation}
However, there is no theoretical analysis about the importance of low rank property of the representation matrix $\bf{Z}$ for subspace clustering. Besides, the solution $\bf{Z}^*$ may be very dense and far from block-diagonal.

Least Squares Regression (LSR) \cite{lu2012robust} employs the Frobenius norm to handle the representation matrix and the noise matrix simultaneously. The corresponding optimization problem is defined as
\begin{equation}\label{6}
\begin{array}{l}
\mathop {\min }\limits_{\bf{Z}} {\kern 1pt} {\kern 1pt} {\kern 1pt} {\kern 1pt} {\left\| \bf{E} \right\|_F^2} + \lambda {\left\| \bf{Z} \right\|_F^2}\\
s.t.{\kern 1pt} {\kern 1pt} {\kern 1pt} {\kern 1pt} \bf{X = XZ + E}.
\end{array}
\end{equation}
Note that the above problem can be efficiently solved. The main contribution of LSR is that it encourages grouping effect which can group highly correlated data together.

In order to balance the sparsity and low rank property of the representation matrix, Correlation Adaptive Subspace Segmentation (CASS) \cite{lu2013correlation} was proposed to optimize the problem
\begin{equation}\label{7}
\begin{array}{*{20}{l}}
{\mathop {\min }\limits_{{\bf{Z}},{\bf{E}}} \left\| {\bf{E}} \right\|_F^2 + \lambda \sum\limits_{i = 1}^n {{{\left\| {{\bf{X}}diag({{\bf{z}}_i})} \right\|}_*}} }\\
{s.t.{\kern 1pt} {\kern 1pt} {\kern 1pt} {\kern 1pt} {\bf{X}} = {\bf{XZ}} + {\bf{E}},}
\end{array}
\end{equation}
where ${{{\left\| {{\bf{X}}diag({{\bf{z}}_i})} \right\|}_*}}$ is trace lasso and its definition can be found in \cite{lu2013correlation}. Due to taking the data correlation into account, it can adaptively interpolate SSC and LSR.
\begin{figure*}[t]
\begin{center}
\includegraphics[width=14cm]{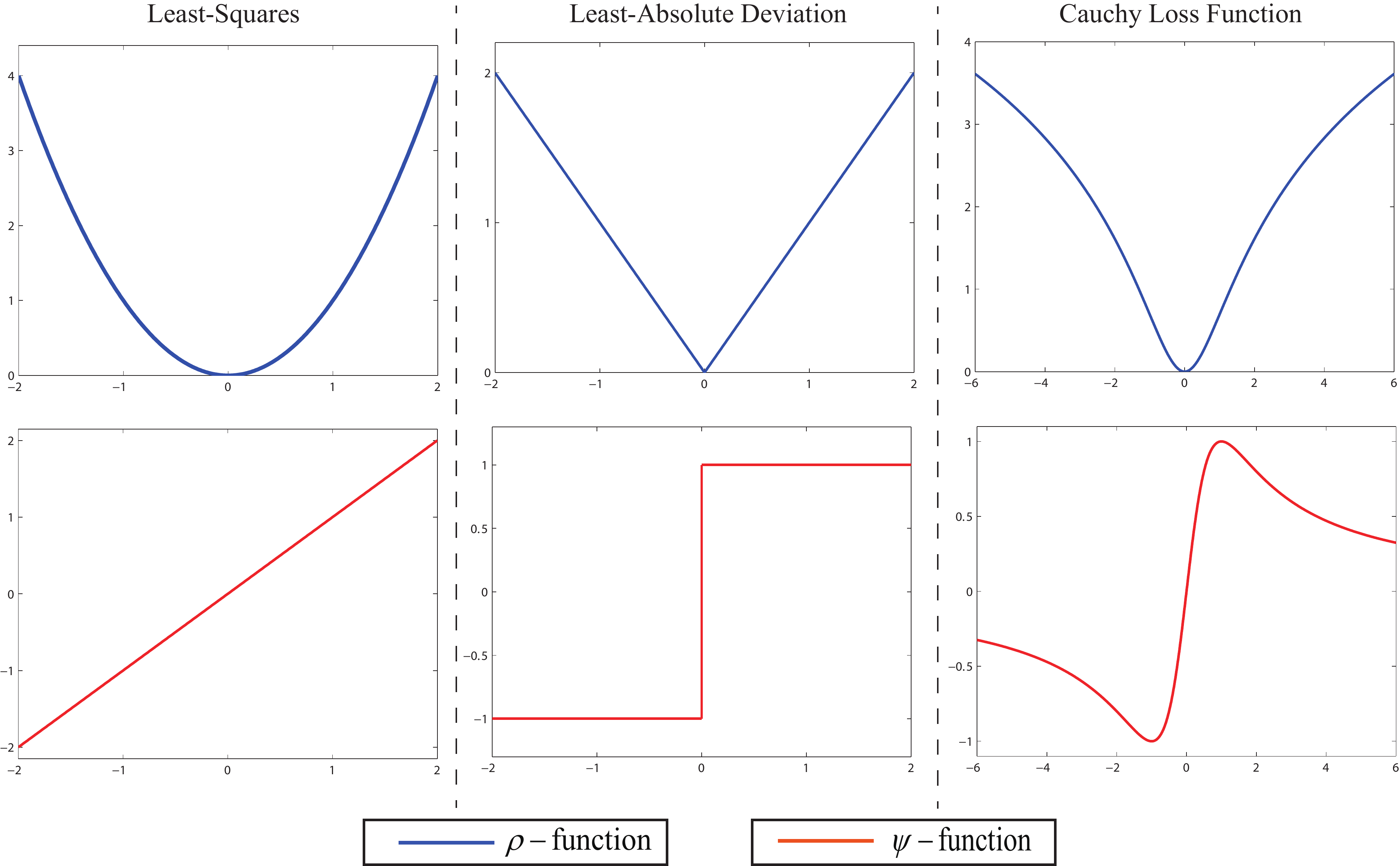}\\
\end{center}
\caption{An illustration of different estimators. The right is least-squares, the middle represents least-absolute deviation and the left is Cauchy loss function.}\label{Cauchy}
\end{figure*}

Mixture of Gaussian Regression (MoG Regression) \cite{li2015subspace}, as a most related method to our work, uses the mixture of Gaussian model to describe the noise term and tries to solve the following problem
\begin{equation}\label{8}
\begin{split}
&\mathop {\min }\limits_{{\bf{Z,E,\pi ,\Sigma }}}  - \sum\limits_{i = 1}^n {\ln \left( {\sum\limits_{k = 1}^K {{\pi _k}N({{\bf{e}}_i}|0,{{\bf{\Sigma }}_k})} } \right) + \lambda \left\| {\bf{Z}} \right\|_F^2}  \\
&s.t.{\kern 1pt} {\kern 1pt} {\kern 1pt} {\bf{X = XZ + E}},diag(\bf{Z}) = 0,\\
&{\kern 1pt} {\kern 1pt} {\kern 1pt} {\kern 1pt} {\kern 1pt} {\kern 1pt} {\kern 1pt} {\kern 1pt} {\kern 1pt} {\kern 1pt} {\kern 1pt} {\kern 1pt} {\kern 1pt} {\kern 1pt} {\kern 1pt} {\pi _k} \ge 0,{{\bf{\Sigma}} _k} \in {S^ + },\sum\limits_{k = 1}^K {{\pi _k} = 1} ,
\end{split}
\end{equation}
where ${\kern 1pt} {\pi _k}$ is the mixing weight, ${{{\bf{e}}_n}}$ is mean vector, ${{{\bf{\Sigma}} _k}}$ is the covariance matrix and $K$ denotes the number of Gaussian. Although MoG Regression has better performance than the single Gaussian model, it is only a extended version of single Gaussian and is sensitive to the number of Gaussian. Additionally, solving the above problem needs high computation cost.

\section{Subspace Clustering by CLF \label{section 3}}
In this paper, we propose a new spectral clustering based method to alleviate the influence of the noise on subspace clustering. Particularly, we employ Cauchy loss function (CLF) to suppress the noise. Next we give the details of our optimization objection function and the framework of our subspace clustering method.
\subsection{Problem Formulation}
In statistics, M-estimator is a broad class of estimators, which is used to represent the minima of sum of functions. Let $r_i$ denotes the residual of the $i$-th data with its estimated value and $\rho(r_i)$ be a symmetric and positive-define function which has a unique minimum at zero. M-estimator aims to optimize the following problem:
\begin{equation}\label{9}
\min \sum\limits_i {\rho ({r_i}).}
\end{equation}
The influence function of $\rho$-function is defined as:
\begin{equation}\label{10}
\psi (x) = \frac{{\partial \rho (x)}}{{\partial x}},
\end{equation}
which is used to measure the effect of changing a point of the sample on the value of the parameter estimation.

We demonstrate different estimators and their influence functions in Fig. \ref{Cauchy}. For the $l_2$ estimator (least-squares) with $\rho(x)=x^2$, its influence function is $\psi(x)=x$. From Fig. \ref{Cauchy}, we can see that the influence of a sample on the estimate grows linearly as the error increases. This means the $l_2$ estimator is not robust to the noise. Although the $l_1$ estimator (least-absolute deviation) with $\rho(x)=\left| x \right|$ can alleviate the effect of the large error, its influence function has no cut-off \cite{xu2015multi,he2000breakdown}. For a robust estimator, its influence function should not be sensitive to the increase of the error. CLF gives good characteristic on this aspect, and its definition is shown below
\begin{equation}\label{11}
\rho (x) = \log (1 + {(x/c)^2})
\end{equation}
with influence function
\begin{equation}\label{12}
\psi (x) = \frac{{2x}}{{{x^2} + {c^2}}},
\end{equation}
where $c$ is a constant. Note that CLF's influence function has the upper bound and its value tends to zero with the increase of the error.

Considering CLF is robust to the noise, we use CLF to penalize the noise term which is defined as
\begin{equation}\label{13}
\sum\limits_{i = 1}^n {\log (1 + \frac{{\left\| {{{\bf{x}}_i} - {\bf{X}}{{\bf{z}}_i}} \right\|_2^2}}{{{c^2}}}),}
\end{equation}
where $\bf{X}$ is the data matrix, and ${\bf{z}}_i$ denotes the representation vector of the $i$-th data ${\bf{x}}_i$. As stated before, we simply use the Frobenius norm to regularize the representation matrix for verifying the influence of the noise model on subspace clustering and facilitating the problem solving. The corresponding model can be formulated as
\begin{equation}\label{14}
\mathop {\min }\limits_{\bf{Z}} \sum\limits_{i = 1}^n {\log (1 + \frac{{\left\| {{{\bf{x}}_i} - {\bf{X}}{{\bf{z}}_i}} \right\|_2^2}}{{{c^2}}}) + \lambda \left\| {\bf{Z}} \right\|_F^2,}
\end{equation}
where $\lambda$ is a weight factor to balance the effect of two terms. For the formula (\ref{14}), an iterative algorithm can be employed to find the solution for each data point, but it is not a high-efficiency way to obtain the representation matrix. In order to reduce the time complexity and keep the valuable property, we revise the formula (\ref{14}) and give the final objective function
\begin{equation}\label{15}
\mathop {\min }\limits_{\bf{Z}} \log (1 + \frac{{\left\| {{\bf{X}} - {\bf{XZ}}} \right\|_F^2}}{{{c^2}}}) + \lambda \left\| {\bf{Z}} \right\|_F^2.
\end{equation}
Note that it takes the representation matrix $\bf{Z}$ as an integrate to learn. Therefore we can directly to optimize the representation matrix by using an iteration process.
\subsection{Optimization}
For the problem (\ref{15}), we adopt Iteratively Re-weighted Residuals (IRR) method to find the solution. Given the data matrix $X$, the formula (\ref{15}) can be rewritten as
\begin{equation}\label{16}
\mathop {\min }\limits_{\bf{Z}} {\cal J} = \log (1 + \frac{{\left\| {{\bf{X - XZ}}} \right\|_F^2}}{{{c^2}}}) + \lambda \left\| {\bf{Z}} \right\|_F^2.
\end{equation}
Setting the derivative of $\mathcal{J}$ with respect to $\bf{Z}$ to zero, we have
\begin{equation}\label{17}
\frac{{ - 2{{\bf{X}}^T}({\bf{X}} - {\bf{XZ}})}}{{{c^2} + \left\| {{\bf{X}} - {\bf{XZ}}} \right\|_F^2}} + 2\lambda {\bf{Z}} = 0,
\end{equation}
which is equivalent to
\begin{equation}\label{18}
\left( {\frac{{{{\bf{X}}^T}{\bf{X}}}}{{{c^2} + \left\| {{\bf{X}} - {\bf{XZ}}} \right\|_F^2}} + \lambda {\bf{I}}} \right){\bf{Z}} = \frac{{{{\bf{X}}^T}{\bf{X}}}}{{{c^2} + \left\| {{\bf{X}} - {\bf{XZ}}} \right\|_F^2}}.
\end{equation}
Then we can obtain the solution
\begin{algorithm}[t]
  \caption{Iteratively Re-weighted Residuals}
  \label{alg1}
  \vspace{0.1cm}
  \textbf{Input:}\ data matrix $\bf{X}$, parameters $\lambda$ and $c$, initial representation matrix ${\bf{Z}}^0$, $t=0$.

  \textbf{Output:}\ ${\bf{Z}}^*$.

  \hspace{0.2cm}\textbf{while} not converge \textbf{do}
  \begin{enumerate}
    \item ${{\bf{R}}^{t + 1}} \leftarrow {\bf{X}} - {\bf{X}}{{\bf{Z}}^t}$
    \item ${{Q}^{t + 1}} \leftarrow 1/({c^2} + \left\| {{{\bf{R}}^{t + 1}}} \right\|_F^2)$
    \item ${{\bf{Z}}^{t + 1}} \leftarrow {Q}^{t + 1}{\left( {{Q}^{t + 1}}{{{\bf{X}}^T}{\bf{X}} + 2\lambda {\bf{I}}} \right)^{ - 1}}{{\bf{X}}^T}{\bf{X}}$
  \end{enumerate}
  \hspace{0.2cm}\textbf{end while}
\end{algorithm}
\begin{equation}\label{19}
\left\{ {\begin{split}
&{\bf{Z}} = Q{\left( Q{{{\bf{X}}^T}{\bf{X}} + \lambda {\bf{I}}} \right)^{ - 1}}{{\bf{X}}^T}{\bf{X}}\\
& Q = \frac{1}{{{c^2} + \left\| {\bf{R}} \right\|_F^2}}\\
&{\bf{R}}{\rm{  =  }}{\bf{X}}{\rm{ - }}{\bf{XZ}}&
\end{split}} \right.,
\end{equation}
where $\bf{R}$ is the residual of the data matrix with the corrected matrix, and $Q$ is the weight function which is used to reduce the effect of the noise.
Note that $Q$ should be calculated using the representation matrix $\bf{Z}$. Then an iterative way is adopted to update $\bf{Z}$ until convergence. The whole procedure for solving problem (\ref{15}) is described in Algorithm 1.
\subsection{Subspace Clustering Algorithm via CLF}

In this section, we give the framework of our proposed subspace clustering algorithm which is outlined in Algorithm \ref{alg2}. Note that we first use Algorithm \ref{alg1} to find the representation matrix ${\bf{Z}}^*$. Then the similarity matrix is defined as ${\bf{W}} = (\left| {{{\bf{Z}}^*}} \right| + | {{{({{\bf{Z}}^*})}^T}} |)/2$, where ${({{\bf{Z}}^*})^T}$ is the transposition of ${\bf{Z}}^*$. Finally, Normalized Cuts \cite{shi2000normalized}, a kind of spectral clustering algorithm \cite{Luxburg2007A}, is employed to group the data points into $k$ clusters based on the similarity matrix.

\begin{figure*}[!t]
\centering
\subfigure{\includegraphics[width=17.45cm]{{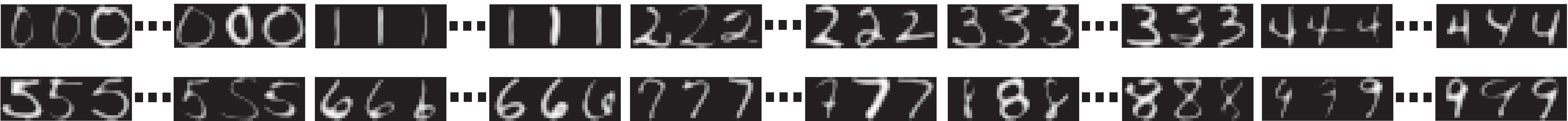}}%
\label{fig_first_case}}
\hfil
\subfigure[SSC]{\includegraphics[width=4cm]{{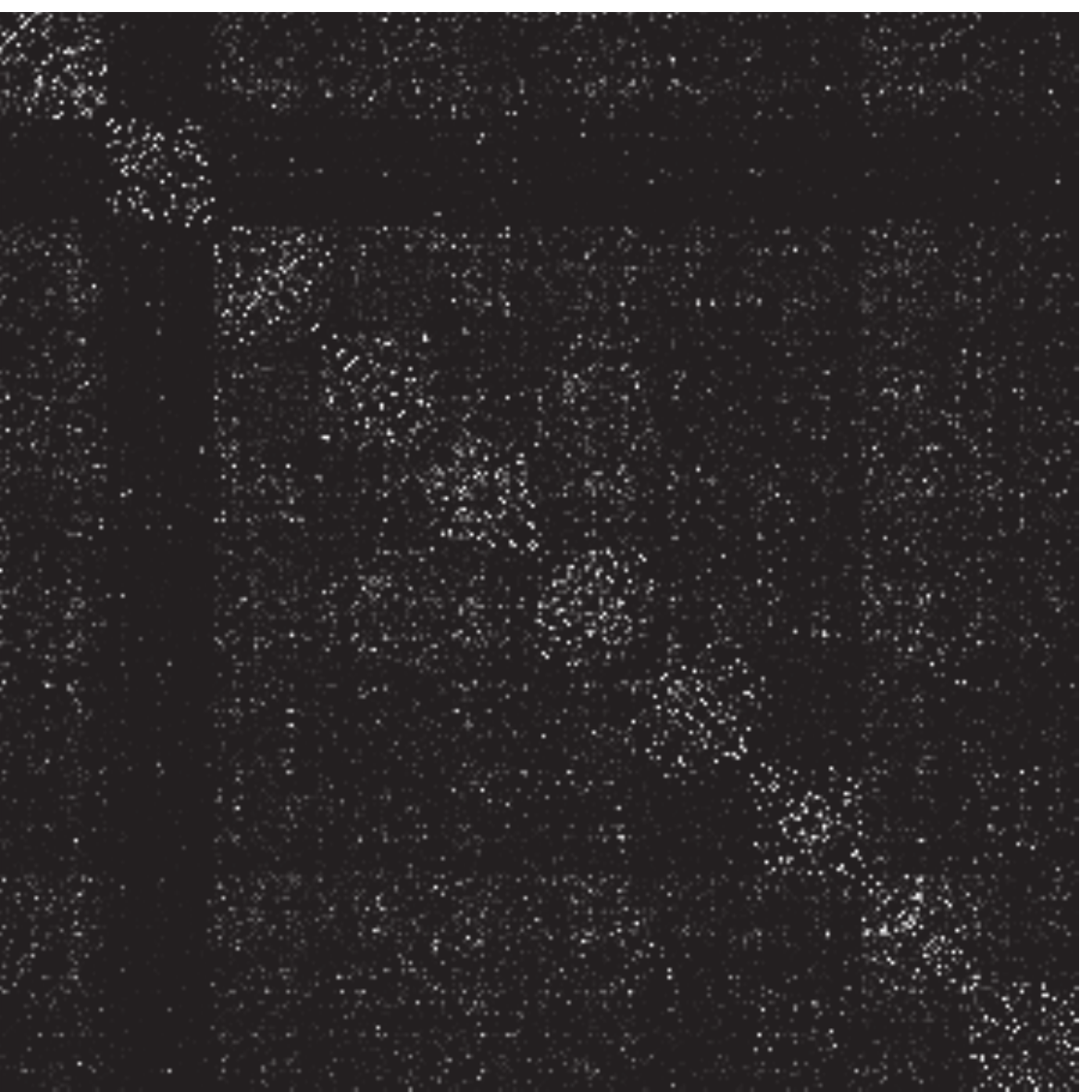}}%
\label{fig_second_case}}
\hfil
\subfigure[LRR]{\includegraphics[width=4cm]{{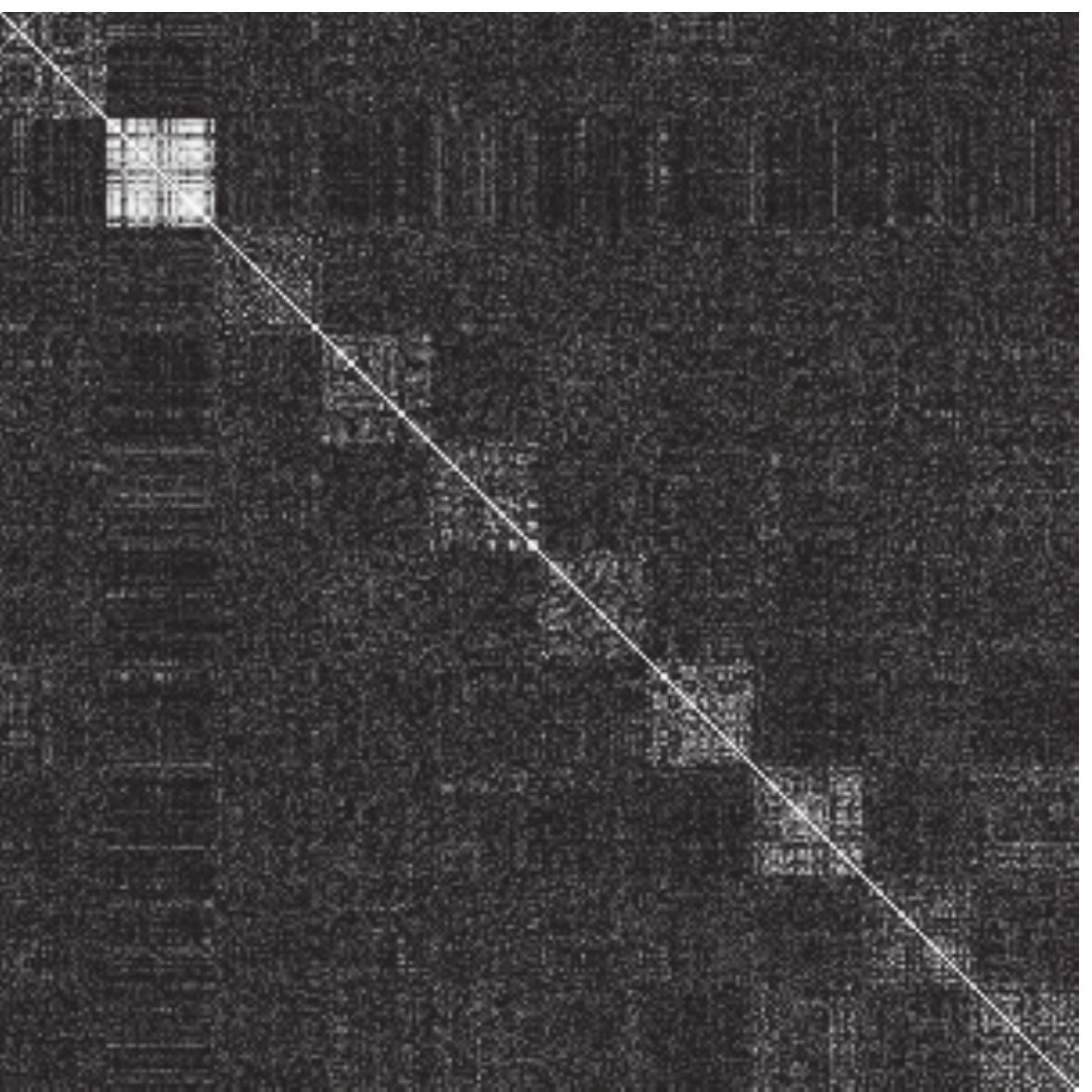}}%
\label{fig_second_case}}
\hfil
\subfigure[LSR]{\includegraphics[width=4cm]{{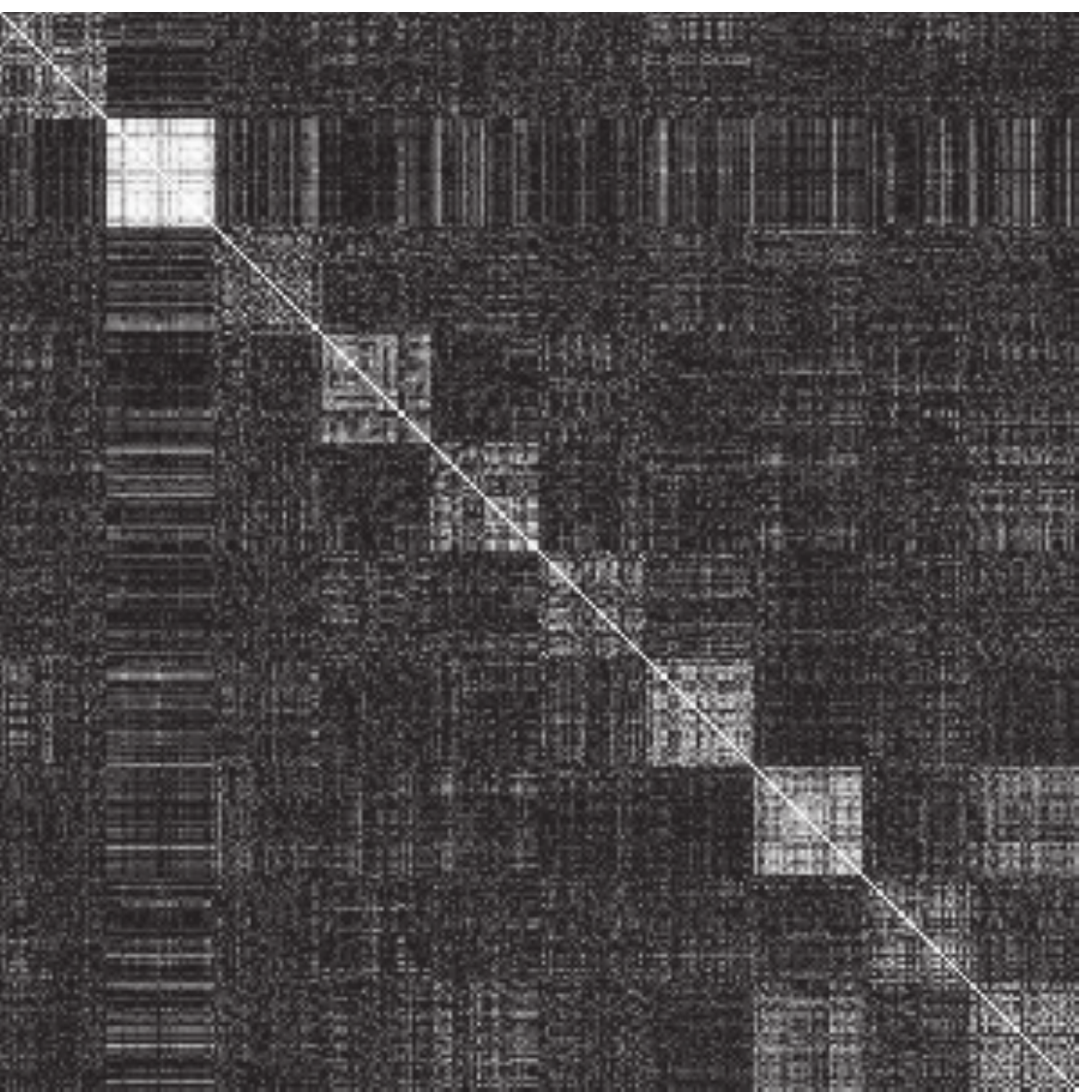}}%
\label{fig_second_case}}
\hfil
\subfigure[CASS]{\includegraphics[width=4cm]{{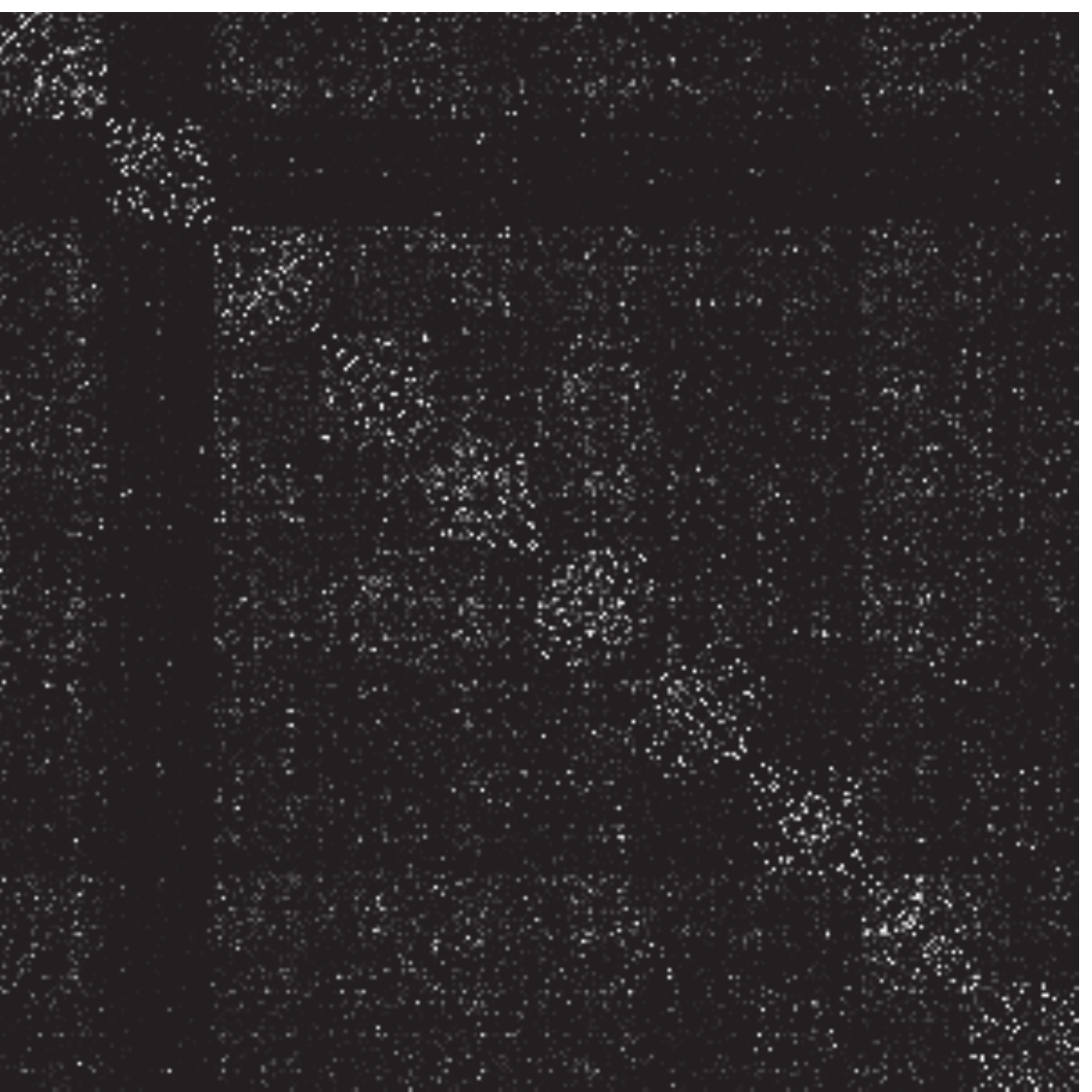}}%
\label{fig_second_case}}
\hfil
\subfigure[MoG]{\includegraphics[width=4cm]{{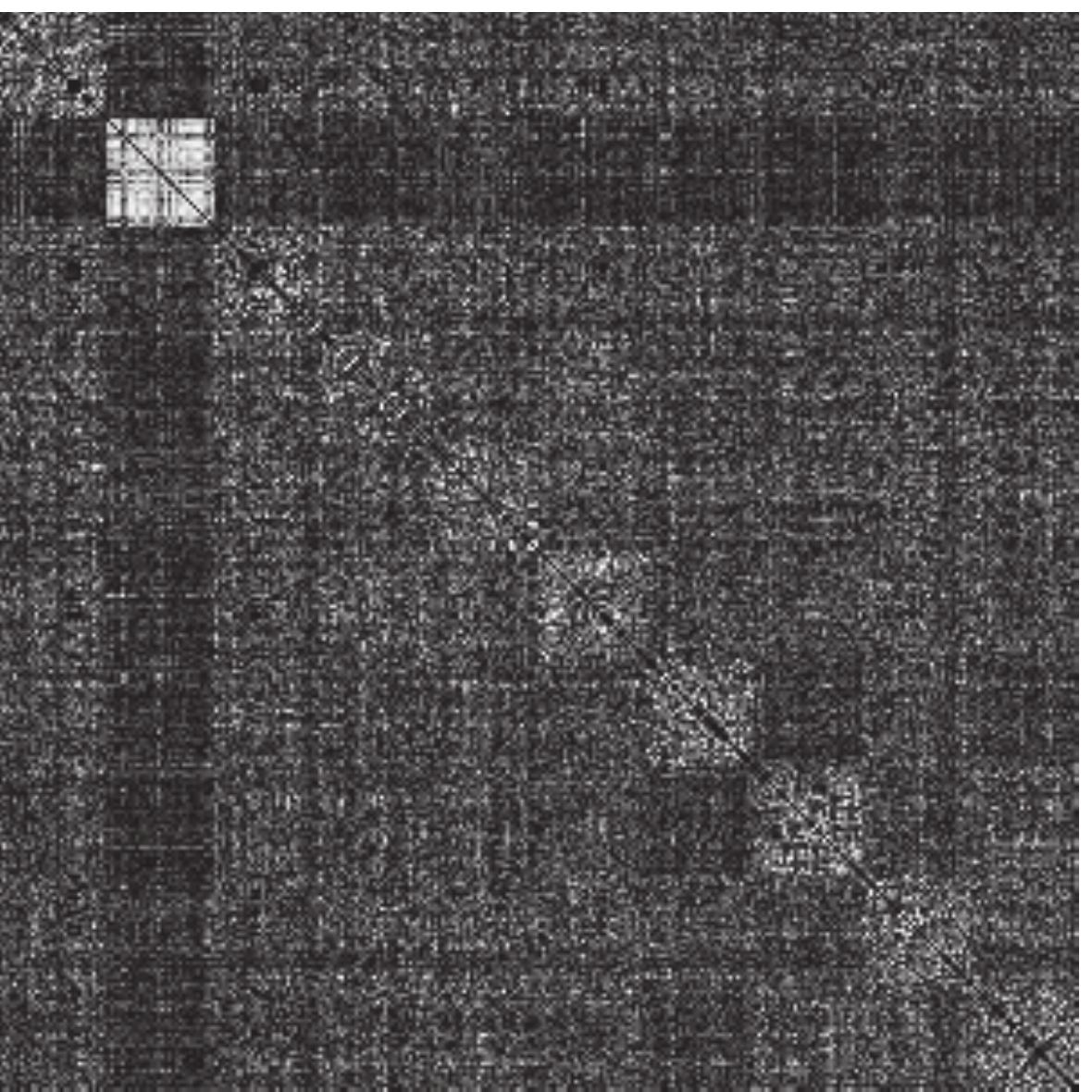}}%
\label{fig_second_case}}
\hfil
\subfigure[NSSC]{\includegraphics[width=4cm]{{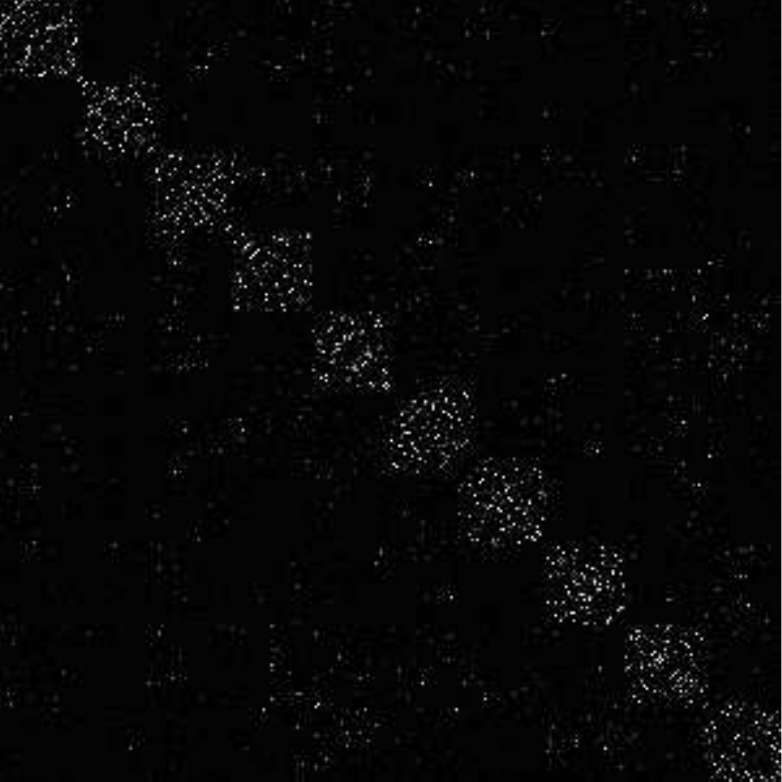}}%
\label{fig_second_case}}
\hfil
\subfigure[Ours]{\includegraphics[width=4cm]{{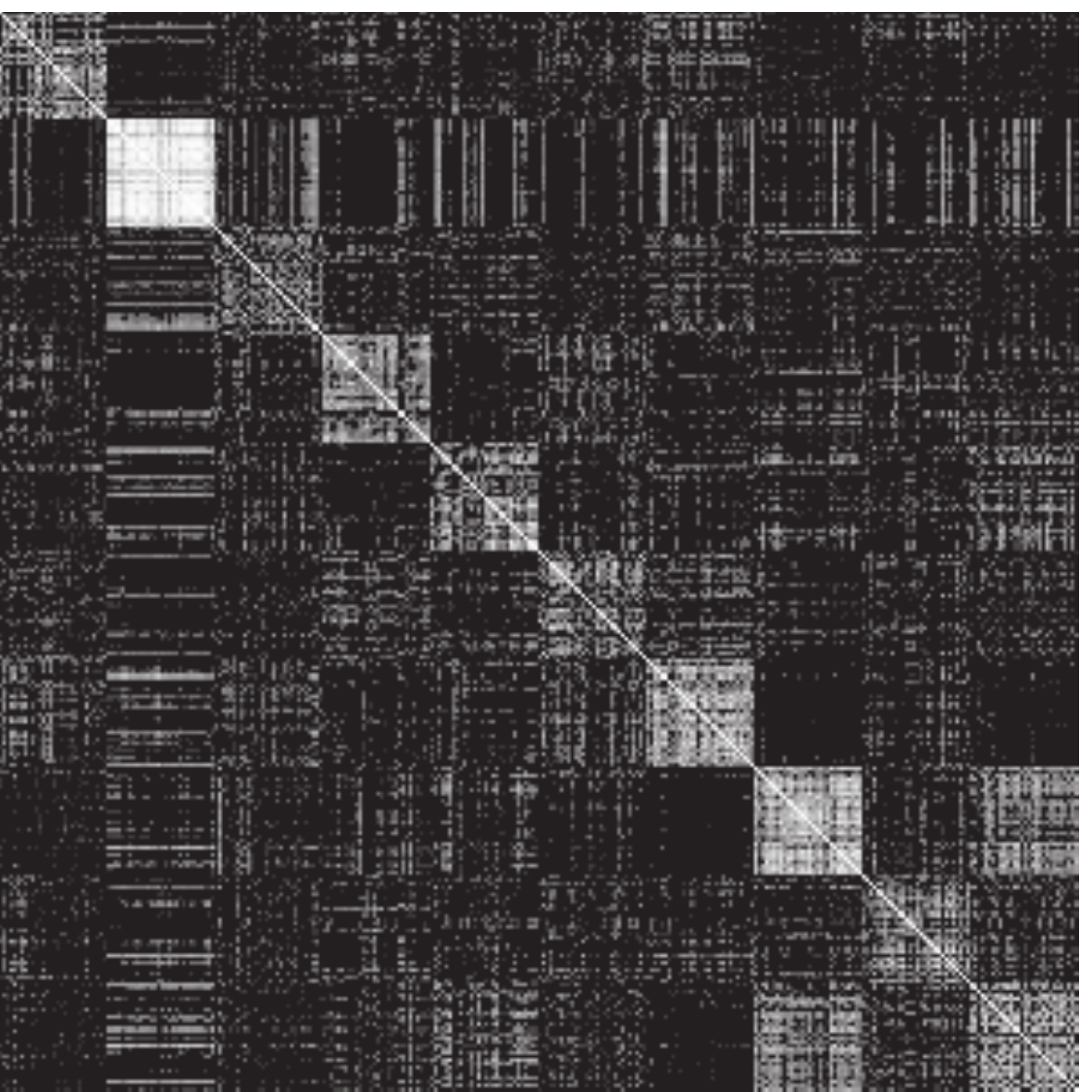}}%
\label{fig_second_case}}
\caption{An example of the similarity matrix $W$ of 10 classes derived by different methods on the USPS database.}
\label{AffinityMatrix}
\end{figure*}
In order to demonstrate the structure of the learned similarity matrix, we show the similarity matrices of 10 subjects derived by SSC, LRR, LSR, CASS, MoG Regression, NSSC and our proposed method on the USPS dataset in Fig. \ref{AffinityMatrix}. For simplicity, we use MoG to denote MoG Regression. USPS is a popular handwritten digit database for clustering analysis. From Fig. \ref{AffinityMatrix}, we can see that all the methods can give a approximate block-diagonal matrix. The similarity matrices obtained by SSC and CASS are sparse and similar which means that CASS gives a large weight for the sparsity of the representation matrix. Besides, NSSC also gives a very sparse similarity matrix. However, the points in the same cluster have no high correlation which can degenerate the performance of subspace clustering. In contrast, the similarity matrices learned by LRR, LSR, MoG Regression and our method are very dense which give high similarity for the samples within the same cluster. Furthermore, we define a Contrast Index (CI) to quantitatively measure the difference between diagonal blocks and non-diagonal blocks of the similarity matrix. The corresponding formulation is
\begin{equation}\label{20}
CI = \frac{{{S_D}}}{{{S_D} + {S_{ND}}}} = \frac{{{S_D}}}{{{{\left\| {W} \right\|}_1}}},
\end{equation}
where $S_D$ and $S_{ND}$ denote the sum of elements in diagonal and non-diagonal blocks, respectively. Table \ref{Contrast} gives the CI of the similarity matrices obtained by different methods. Note that MoG gives a lowest CI which can be seen from Fig. \ref{AffinityMatrix}. Obviously, our method gives a higher CI than other methods which means that our proposed model has greater ability to group correlated data together.
\begin{algorithm}[t]
  \caption{Subspace Clustering Algorithm via CLF}
  \label{alg2}
  \begin{algorithmic}[1]
  \REQUIRE data matrix $\bf{X}$, number of subspaces $k$
  \begin{enumerate}[leftmargin=1pt]
    \item Solve the problem (\ref{15}) and obtain the final representation matrix ${\bf{Z}}^*$.
    \item Construct similarity matrix W using $(\left| {\bf{Z}}^* \right| + \left| {{({\bf{Z}}^*)^T}} \right|)/2$.
    \item Group the data points into $k$ clusters by Normalized Cuts.
  \end{enumerate}
  \end{algorithmic}
\end{algorithm}

\begin{table}[t]
\renewcommand{\arraystretch}{1.3}
\caption{The Contrast Index (CI) (\%) of the similarity matrices obtained by different methods.}\label{Contrast}
\begin{center}
\begin{tabular}{|c|c|c|c|c|c|c|c|}
\hline
Method & SSC & LRR & LSR & CASS & MoG & NSSC & Ours\\
\hline
CI & 30.39 & 23.89 & 25.46 & 30.30 & 16.80 & 32.39 &\textbf{ 38.12}\\
\hline
\end{tabular}
\end{center}
\end{table}
\section{Theoretical Analysis \label{section 4}}
In this section, we prove that our proposed method has the grouping effect which can group highly correlated data together, and then analyze the convergence of our optimization algorithm.
\subsection{The Grouping Effect}
\begin{myTheo}
\label{Theo1}
Given a data point ${\bf{x}}\in\mathbb{R}^d$, the normalized data matrix $\bf{X}$ and a parameter $\lambda$. Let ${\bf{\hat z}}$ be the optimal solution to the following problem (in vector form):
\begin{equation}\label{21}
\mathop {\min }\limits_{\bf{z}} \log (1 + \frac{{\left\| {{\bf{x}} - {\bf{Xz}}} \right\|_2^2}}{{{c^2}}}) + \lambda \left\| {\bf{z}} \right\|_2^2.
\end{equation}
Then we have
\begin{equation}\label{22}
\frac{{\left| {{{{\hat z}}^i} - {{{\hat z}}^j}} \right|}}{{{{\left\| {\bf{x}} \right\|}_2}}} \le \frac{1}{{\lambda {c^2}}}\sqrt {2(1 - r)} ,
\end{equation}
where $r = {\bf{x}}_i^T{{\bf{x}}_j}$ is the sample correlation. ${{{\hat z}}^i}$ and ${{{\hat z}}^j}$ are the $i$-th and $j$-th entries of vector ${\bf{\hat z}}$. ${\bf{x}}_i$ and ${\bf{x}}_j$ are the $i$-th and $j$-th columns of $\bf{X}$.
\begin{proof}
Let
\begin{equation}\label{23}
L({\bf{z}}) = \log (1 + \frac{{\left\| {{\bf{x}} - {\bf{Xz}}} \right\|_2^2}}{{{c^2}}}) + \lambda \left\| {\bf{z}} \right\|_2^2.
\end{equation}
Since ${\bf{\hat z}} = \mathop {\arg \min }\limits_{\bf{z}} L({\bf{z}})$, we have
\begin{equation}\label{24}
{\left. {\frac{{\partial L({\bf{z}})}}{{\partial {\bf{z}}}}} \right|_{{\bf{z}} = {\bf{\hat z}}}} = 0.
\end{equation}
This gives
\begin{equation}\label{25}
\frac{{ - 2{{\bf{x}}_i}^T({\bf{x}} - {\bf{X\hat z}})}}{{{c^2} + \left\| {{\bf{x}} - {\bf{X\hat z}}} \right\|_2^2}} + 2\lambda {{\hat z}^i} = 0,
\end{equation}
\begin{equation}\label{26}
\frac{{ - 2{{\bf{x}}_j}^T({\bf{x}} - {\bf{X\hat z}})}}{{{c^2} + \left\| {{\bf{x}} - {\bf{X\hat z}}} \right\|_2^2}} + 2\lambda {\hat z^j} = 0,
\end{equation}
Equations (\ref{25}) and (\ref{26}) give
\begin{equation}\label{27}
{\hat z^i} - {\hat z^j} = \frac{{({{\bf{x}}_i}^T - {{\bf{x}}_j}^T)({\bf{x}} - {\bf{X\hat z}})}}{{\lambda ({c^2} + \left\| {{\bf{x}} - {\bf{X\hat z}}} \right\|_2^2)}} \le \frac{{({{\bf{x}}_i}^T - {{\bf{x}}_j}^T)({\bf{x}} - {\bf{X\hat z}})}}{{\lambda {c^2}}}.
\end{equation}
Since each column of $\bf{X}$ is normalized, ${\left\| {{{\bf{x}}_i} - {{\bf{x}}_j}} \right\|_2} = \sqrt {2(1 - r)}$, where $r = {{\bf{x}}_i}^T{{\bf{x}}_j}$. Note that $\bf{\hat z}$ is the optimal to the problem (\ref{21}), and we deduce
\begin{equation}\label{28}
\begin{split}
\log (1 + \frac{{\left\| {{\bf{x}} - {\bf{X\hat z}}} \right\|_2^2}}{{{c^2}}}) \le \log (1 + \frac{{\left\| {{\bf{x}} - {\bf{X\hat z}}} \right\|_2^2}}{{{c^2}}}) + \lambda \left\| {{\bf{\hat z}}} \right\|_2^2\\
 = L({\bf{\hat z}}) \le L(0) = \log (1 + \frac{{\left\| {\bf{x}} \right\|_2^2}}{{{c^2}}}).
\end{split}
\end{equation}
Thus ${\left\| {{\bf{x}} - {\bf{X\hat z}}} \right\|_2} \le {\left\| {\bf{x}} \right\|_2}$. Finally, we obtain
\begin{equation}\label{29}
\frac{{\left| {{{\hat z}^i} - {{\hat z}^j}} \right|}}{{{{\left\| {\bf{x}} \right\|}_2}}} \le \frac{1}{{\lambda {c^2}}}\sqrt {2(1 - r)} .
\end{equation}
\end{proof}
\end{myTheo}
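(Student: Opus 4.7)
The plan is to mimic the standard grouping-effect argument (as used for LSR and the elastic net), with an extra step needed to control the nonlinear Cauchy denominator. First I would invoke first-order optimality: since $\hat{\mathbf{z}}$ minimizes $L(\mathbf{z}) = \log(1 + \|\mathbf{x}-\mathbf{X}\mathbf{z}\|_2^2/c^2) + \lambda\|\mathbf{z}\|_2^2$, the gradient with respect to each coordinate vanishes, so differentiating gives for every index $k$
\begin{equation*}
\frac{-2\mathbf{x}_k^T(\mathbf{x}-\mathbf{X}\hat{\mathbf{z}})}{c^2 + \|\mathbf{x}-\mathbf{X}\hat{\mathbf{z}}\|_2^2} + 2\lambda \hat{z}^k = 0.
\end{equation*}
Specializing to $k=i$ and $k=j$ and subtracting eliminates the common scaling and yields a clean expression for $\hat{z}^i - \hat{z}^j$ as an inner product of $(\mathbf{x}_i-\mathbf{x}_j)$ with the residual, divided by $\lambda$ times the Cauchy denominator.

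Next I would bound the two factors separately. For the denominator I would simply drop the nonnegative $\|\mathbf{x}-\mathbf{X}\hat{\mathbf{z}}\|_2^2$ summand to lower bound it by $c^2$, which is where the $1/(\lambda c^2)$ in the conclusion comes from. For the numerator I would apply Cauchy--Schwarz to get $|(\mathbf{x}_i-\mathbf{x}_j)^T(\mathbf{x}-\mathbf{X}\hat{\mathbf{z}})| \le \|\mathbf{x}_i-\mathbf{x}_j\|_2 \cdot \|\mathbf{x}-\mathbf{X}\hat{\mathbf{z}}\|_2$, and then use the normalization assumption $\|\mathbf{x}_i\|_2 = \|\mathbf{x}_j\|_2 = 1$ to conclude $\|\mathbf{x}_i-\mathbf{x}_j\|_2 = \sqrt{2(1-r)}$.

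The remaining and arguably only nontrivial step is to bound the residual $\|\mathbf{x}-\mathbf{X}\hat{\mathbf{z}}\|_2$ by $\|\mathbf{x}\|_2$. I would do this by a comparison with the feasible point $\mathbf{z}=0$: optimality of $\hat{\mathbf{z}}$ forces
\begin{equation*}
\log\!\Bigl(1+\tfrac{\|\mathbf{x}-\mathbf{X}\hat{\mathbf{z}}\|_2^2}{c^2}\Bigr) \le L(\hat{\mathbf{z}}) \le L(0) = \log\!\Bigl(1+\tfrac{\|\mathbf{x}\|_2^2}{c^2}\Bigr),
\end{equation*}
and since $\log(1+\cdot)$ is strictly monotone this gives $\|\mathbf{x}-\mathbf{X}\hat{\mathbf{z}}\|_2 \le \|\mathbf{x}\|_2$. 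This is the main obstacle because the logarithm breaks the direct quadratic cancellation that makes the LSR proof trivial; the trick is to observe that monotonicity of $\log(1+\cdot)$ is enough to transfer the inequality from the $L$ values to the residual norms.

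Putting the three bounds together, the difference $|\hat z^i-\hat z^j|$ is at most $\|\mathbf{x}_i-\mathbf{x}_j\|_2 \cdot \|\mathbf{x}\|_2 / (\lambda c^2) = \sqrt{2(1-r)}\,\|\mathbf{x}\|_2/(\lambda c^2)$, dividing by $\|\mathbf{x}\|_2$ yields the advertised inequality. The whole argument is about one page of calculation; no convexity or uniqueness of $\hat{\mathbf{z}}$ is needed beyond the first-order condition, which is a pleasant feature since the Cauchy loss is nonconvex.
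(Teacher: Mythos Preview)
Your proposal is correct and follows essentially the same route as the paper's own proof: first-order optimality conditions at coordinates $i$ and $j$, subtraction, the lower bound $c^2$ on the Cauchy denominator, the normalization identity $\|\mathbf{x}_i-\mathbf{x}_j\|_2=\sqrt{2(1-r)}$, and the comparison $L(\hat{\mathbf{z}})\le L(0)$ to obtain $\|\mathbf{x}-\mathbf{X}\hat{\mathbf{z}}\|_2\le\|\mathbf{x}\|_2$. If anything, your write-up is slightly cleaner than the paper's, since you make the Cauchy--Schwarz step and the absolute-value handling explicit.
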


As stated in Theorem \ref{Theo1}, if ${\bf{x}}_i$ and ${\bf{x}}_j$ are highly correlated, the value of $r$ is close to 1, which means that the difference between ${\hat z}^i$ and ${\hat z}^j$ is almost 0. Then ${\bf{x}}_i$ and ${\bf{x}}_j$ can be grouped into the same cluster. Note that Theorem \ref{Theo1} gives the grouping effect for one point (vector form). For the matrix form, the corresponding grouping effect can still be proved using the similar proof procedure of Theorem \ref{Theo1}.
\subsection{Convergence Analysis}
We employ the Weiszfeld's method \cite{voss1980linear} to analyze the convergence of Algorithm \ref{alg1}. The formula (\ref{16}) is equivalent to
\begin{equation}\label{30}
\mathop {\min }\limits_{{{\bf{z}}_1},...,{{\bf{z}}_n}} {\cal J}({\bf{Z}}) = \log \left( {1 + \frac{{\sum\limits_{i = 1}^n {\left\| {{{\bf{x}}_i} - {\bf{X}}{{\bf{z}}_i}} \right\|_2^2} }}{{{c^2}}}} \right) + \lambda \sum\limits_{i = 1}^n {\left\| {{{\bf{z}}_i}} \right\|_2^2} ,
\end{equation}
where ${\bf{z}}_i$ is the representation vector of ${\bf{x}}_i$. The solution $\bf{Z}$ in (\ref{19}) can be rewritten as
\begin{equation}\label{31}
{{\bf{z}}_i} = Q{\left( Q{{{\bf{X}}^T}{\bf{X}} + 2\lambda {\bf{I}}} \right)^{ - 1}}{{\bf{X}}^T}{{\bf{x}}_i},i = 1,2,..,n.
\end{equation}
The main idea of the Weiszfeld¡¯s method is to globally approximate $\cal J$ using  a sequence of quadratic function \cite{singer2007duality}. After obtaining the solution ${\bf{Z}}^k$, we can define a upper bound of ${\cal J}({\bf{z}}_i)$ as $\phi({\bf{z}}_i;{\bf{z}}_i^k)$, where ${\cal J}({\bf{z}}_i)$ is obtained by fixing the other variables in ${\cal J}({\bf{Z}})$. $\phi({\bf{z}}_i;{\bf{z}}_i^k)$ should satisfy the following conditions:
\begin{equation}\label{32}
\begin{split}
&\phi({\bf{z}}_i^k;{\bf{z}}_i^k) = {\cal J}({\bf{z}}_i^k)\\
&\phi'({\bf{z}}_i^k;{\bf{z}}_i^k) = {\cal J}'({\bf{z}}_i^k)
\end{split}
\end{equation}
Then $\phi({\bf{z}}_i;{\bf{z}}_i^k)$ has the form
\begin{equation}\label{33}
\begin{split}
\phi ({{\bf{z}}_i};{\bf{z}}_i^k) = {\cal J}({\bf{z}}_i^k) + {({{\bf{z}}_i} - {\bf{z}}_i^k)^T}{{\cal J}^\prime }({\bf{z}}_i^k)\\
 + {({{\bf{z}}_i} - {\bf{z}}_i^k)^T}C({\bf{z}}_i^k)({{\bf{z}}_i} - {\bf{z}}_i^k)
\end{split}
\end{equation}
with symmetric matrix $C({\bf{z}}_i^k)$
\begin{equation}\label{34}
C({\bf{z}}_i^k) = \frac{{{{\bf{X}}^T}{\bf{X}}}}{{{c^2} + \left\| {{\bf{X}} - {\bf{X}}{{\bf{Z}}^k}} \right\|_F^2}} + \lambda \bf{I} .
\end{equation}
Then the convergence of Algorithm \ref{alg1} can be guaranteed by the following theorem.
\begin{myTheo}The IRR algorithm proposed in Algorithm \ref{alg1} guarantees that the objective function value of (\ref{16}) is monotone decreasing in iterations, i.e. $ {\cal J}({{\bf{Z}}^{k+1}}) \le {\cal J}({{\bf{Z}}^k}) $, until it converges.
\label{Theo2}
\end{myTheo}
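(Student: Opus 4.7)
The plan is to prove descent via the majorization-minimization (MM) template already suggested by the paper: at the current iterate $\mathbf{Z}^k$, construct a surrogate $\phi(\,\cdot\,;\mathbf{Z}^k)$ that dominates $\mathcal{J}$ globally and agrees with it at $\mathbf{Z}^k$, then show that the IRR update in Algorithm \ref{alg1} is exactly the minimizer of that surrogate. Monotone descent then follows from the standard three-term chain $\mathcal{J}(\mathbf{Z}^{k+1}) \le \phi(\mathbf{Z}^{k+1};\mathbf{Z}^k) \le \phi(\mathbf{Z}^k;\mathbf{Z}^k) = \mathcal{J}(\mathbf{Z}^k)$, and convergence follows from boundedness below.

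The main obstacle is verifying the global majorization $\phi(\mathbf{z}_i;\mathbf{z}_i^k) \ge \mathcal{J}(\mathbf{z}_i)$, since the tangency conditions (\ref{32}) only force equality of value and gradient at the current iterate. I would exploit concavity of $\log(1+u)$ in $u \ge 0$, which yields the tangent-line upper bound
\[
\log(1+u) \le \log(1+u_k) + \frac{u-u_k}{1+u_k}.
\]
Setting $u = \|\mathbf{X}-\mathbf{X}\mathbf{Z}\|_F^2/c^2$ and $u_k = \|\mathbf{X}-\mathbf{X}\mathbf{Z}^k\|_F^2/c^2$ replaces the nonconvex log term in $\mathcal{J}$ by a convex quadratic in $\mathbf{Z}$ whose coefficient matrix is $\mathbf{X}^T\mathbf{X}/(c^2 + \|\mathbf{X}-\mathbf{X}\mathbf{Z}^k\|_F^2)$. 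Adding the $\lambda\|\mathbf{Z}\|_F^2$ term and expanding the resulting quadratic around $\mathbf{Z}^k$ reproduces precisely the surrogate (\ref{33}) with $C(\mathbf{z}_i^k)$ given by (\ref{34}). Note that this bound decouples across the columns of $\mathbf{Z}$, which is why Algorithm \ref{alg1} can update all columns simultaneously via a single closed-form expression.

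With the majorization in hand, the IRR step is the solution of an unconstrained strictly convex quadratic: because $\lambda>0$, the matrix $C(\mathbf{z}_i^k)$ is positive definite, so $\phi(\,\cdot\,;\mathbf{z}_i^k)$ has a unique minimizer obtained by setting its gradient to zero. A direct calculation, identical to the derivation leading from (\ref{17}) to (\ref{19}), shows this minimizer coincides with the IRR update in Algorithm \ref{alg1}. Combining the majorization inequality, the optimality of $\mathbf{Z}^{k+1}$ for $\phi(\,\cdot\,;\mathbf{Z}^k)$, and the tangency $\phi(\mathbf{Z}^k;\mathbf{Z}^k)=\mathcal{J}(\mathbf{Z}^k)$ yields the inequality chain above and therefore $\mathcal{J}(\mathbf{Z}^{k+1}) \le \mathcal{J}(\mathbf{Z}^k)$. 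Since $\mathcal{J}$ is non-negative, the monotone sequence $\{\mathcal{J}(\mathbf{Z}^k)\}$ is bounded below, so the monotone convergence theorem gives convergence of the objective values, as claimed.
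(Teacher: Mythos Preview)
Your proposal is correct and follows essentially the same majorization--minimization route as the paper, using the same quadratic surrogate $\phi$ with curvature matrix $C(\mathbf{z}_i^k)$ given in (\ref{34}). If anything, your argument is tighter: you justify the global majorization explicitly via concavity of $\log(1+u)$ and handle the simultaneous column update through separability, whereas the paper simply asserts the bound (``by appropriately choosing $\mathbf{z}_i^k$ near $\mathbf{z}_i$'') and argues column-by-column before invoking (\ref{30}).
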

\begin{proof}
Suppose that $\phi({\bf{z}}_i;{\bf{z}}_i^k)$ is locally convex with respect to ${\bf{z}}_i$ and has a local minimizer. Let ${\bf{z}}_i^{k+1}$ be the minimizer, we get
\begin{equation}\label{35}
{\phi}^\prime ({{\bf{z}}_i}^{k + 1};{\bf{z}}_i^k) = {{\cal J}^\prime }({\bf{z}}_i^k) + 2C({\bf{z}}_i^k)({{\bf{z}}_i}^{k + 1} - {\bf{z}}_i^k) = 0.
\end{equation}
Substituting for ${\cal J}'({\bf{z}}_i^k)$, we can obtain the update rule in formula (\ref{31}).

By appropriately choosing ${\bf{z}}_i^k$ near ${\bf{z}}_i$, we have ${\cal J}({\bf{z}}_i) \le \phi({\bf{z}}_i;{\bf{z}}_i^k)$ which implies that
\begin{equation}\label{36}
\begin{split}
{\cal J}({\bf{z}}_i^{k + 1}) & \le \phi ({{\bf{z}}_i}^{k + 1};{\bf{z}}_i^k)\\
 &= {\cal J}({\bf{z}}_i^k) + ({{\bf{z}}_i}^{k + 1} - {\bf{z}}_i^k)^T{\cal J}'({\bf{z}}_i^k)\\
 &+{({{\bf{z}}_i}^{k + 1} - {\bf{z}}_i^k)^T}C({\bf{z}}_i^k)({{\bf{z}}_i}^{k + 1} - {\bf{z}}_i^k).
\end{split}
\end{equation}
Equations (\ref{35}) and (\ref{36}) give
\begin{equation}\label{37}
\begin{split}
{\cal J}({\bf{z}}_i^{k + 1}) - {\cal J}({\bf{z}}_i^k) & \le - {({{\bf{z}}_i}^{k + 1} - {\bf{z}}_i^k)^T}C({\bf{z}}_i^k)({{\bf{z}}_i}^{k + 1} - {\bf{z}}_i^k)\\
 & \le  - \lambda {\left\| {{{\bf{z}}_i}^{k + 1} - {\bf{z}}_i^k} \right\|^2 \le 0}.
\end{split}
\end{equation}
So we have ${\cal J}({\bf{z}}_i^{k + 1}) \le {\cal J}({\bf{z}}_i^k)$. Based on (\ref{30}), we can easily deduce
\begin{equation}\label{38}
{\cal J}({\bf{Z}}^{k+1}) \le {\cal J}({\bf{Z}}^k).
\end{equation}
\end{proof}
\section{Experimental Verification and Analysis \label{section 5}}
In this section, we verify the effectiveness of our proposed method on five real databases: Hopkins 155 motion segmentation database \cite{tron2007benchmark}, USPS \cite{zeng2014image}, C-Cube \cite{Camastra2006Offline,DBLP:journals/pr/FeiXFY17}, PEI and Extended Yale B database \cite{wright2009robust}. Our method is compared with the traditional Kmeans, SSC \cite{elhamifar2009sparse}, LRR \cite{liu2013robust}, LSR \cite{lu2012robust}, CASS \cite{lu2013correlation}, MoG Regression \cite{li2015subspace} and NSSC \cite{DBLP:journals/jmlr/WangX16}. SSC, LRR, LSR, CASS, MoG Regression and NSSC are representative subspace clustering methods which are introduced in section \ref{section 2}. For fair comparison with the previous methods, we adopt the same preprocessing for the whole databases: use PCA to reduce the dimension of the original data and keep nearly 98 percent energy. Besides, the parameters of each method are manually tuned to achieve their best performance. Finally, we employ the clustering accuracy (AC) \cite{elhamifar2013sparse,cao2015diversity} and the normalized mutual information metric (NMI) \cite{strehl2002cluster,cai2005document} to evaluate the subspace clustering results. From the experimental results, we can see that our method achieves better performance than other state-of-the-art methods.
\subsection{Data sets \label{ExperimentA}}
We firstly give the detailed description about five real data sets used in the experiments.
\begin{itemize}
  \item The first data set is the Hopkins 155 motion segmentation database. It consists of 155 video sequences, where 120 of the videos have two motions and 35 of the videos contain three motions (a motion corresponding to a subspace). For each video, feature trajectories have been extracted for clustering. The number of feature trajectories of each video ranges from 39 to 550. Each video can be regarded as a subspace clustering task, and so there are 155 subspace segmentation tasks totally.
  \item The second is the USPS database which is one of the standard data sets for handwritten digit recognition \cite{DBLP:conf/cvpr/YangPB16}. It contains 9298 images of hand-written digits from 0 to 9. The size of each image is $16\times16$. To reduce the memory consumption in our experiments, we randomly select 30 images for each digit to construct a subset with 300 samples.
  \item The third is the C-Cube cursive character data set which contains both the upper and lower case of 26 letters. It has 57646 character images and the average dimension of all images is about 3120. For each subject, we randomly select 20 images to form a subset for our experiments. Then each image is normalized to $24\times24$ pixel array and reshaped to a vector.
  \item The forth data set is the FEI part 1 database. This database is the subset of the whole FEI database. It contains 700 images with 50 subjects, and each subject has 14 images captured from a large range of views.
  \item The fifth data set is the Extended Yale B Database which is a popular dataset for image clustering \cite{7434007,DBLP:journals/tip/ChengYYFH10,7517387}. It consists of 2414 frontal face images of 38 subjects, and each subject has about 64 frontal face images with different pose, angle and illumination conditions. In our experiment, we construct three subspace clustering tasks based on the first 5, 8 and 10 subjects, and each subject has 64 face images.

\end{itemize}
\begin{figure}[!t]
\centering
\subfigure[Hopkins 155 motion segmentation database]{\includegraphics[width=7.0cm]{{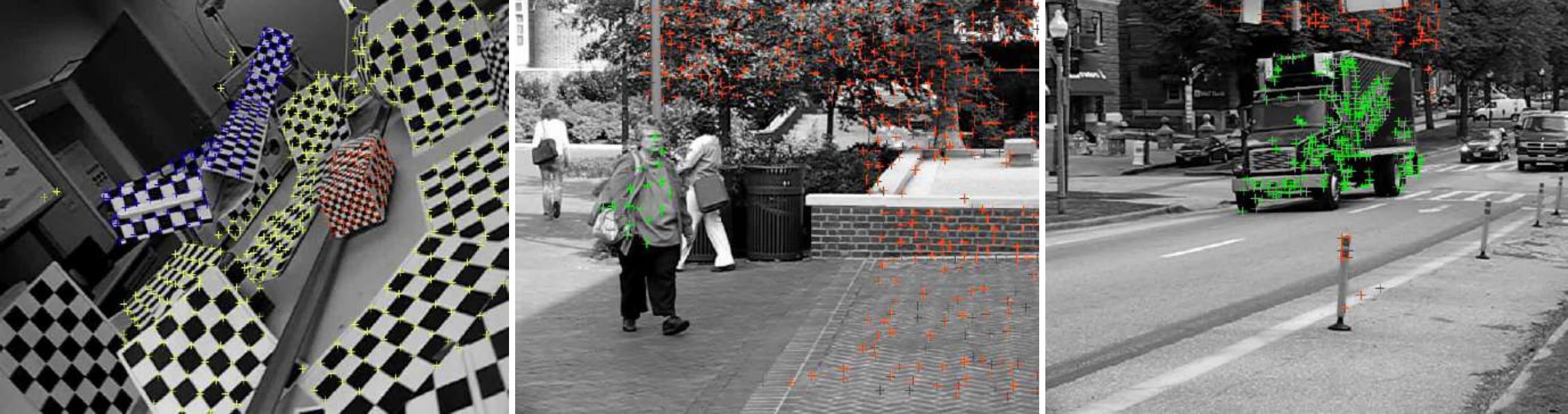}}%
\label{fig_first_case}}
\hfil
\subfigure[USPS database]{\includegraphics[width=7cm]{{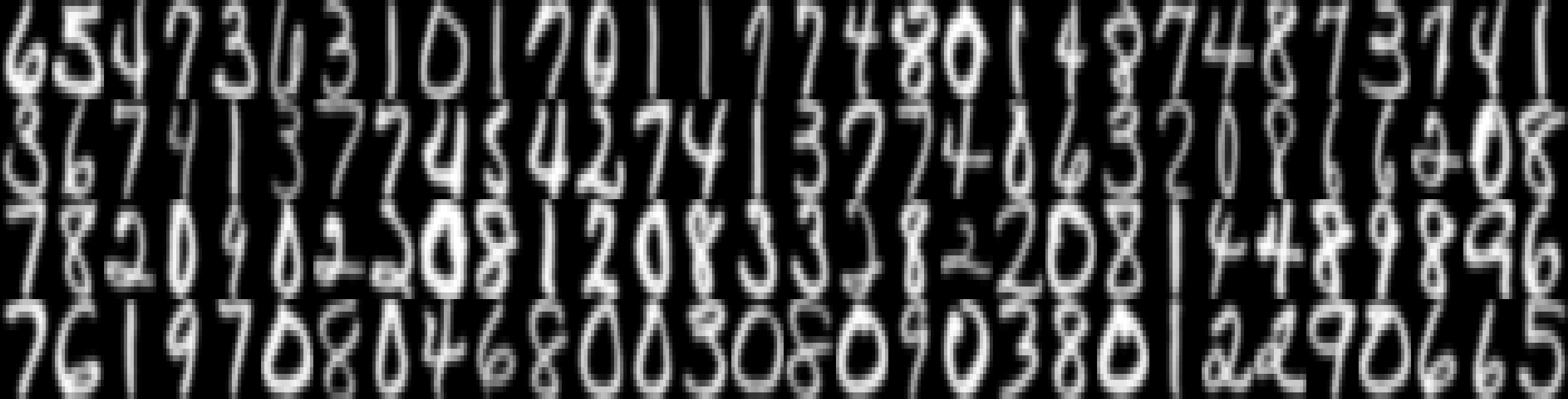}}%
\label{fig_second_case}}
\hfil
\subfigure[C-Cube database]{\includegraphics[width=7cm]{{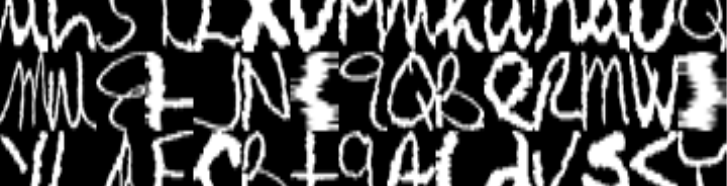}}%
\label{fig_third_case}}
\hfil
\subfigure[FEI database]{\includegraphics[width=7cm]{{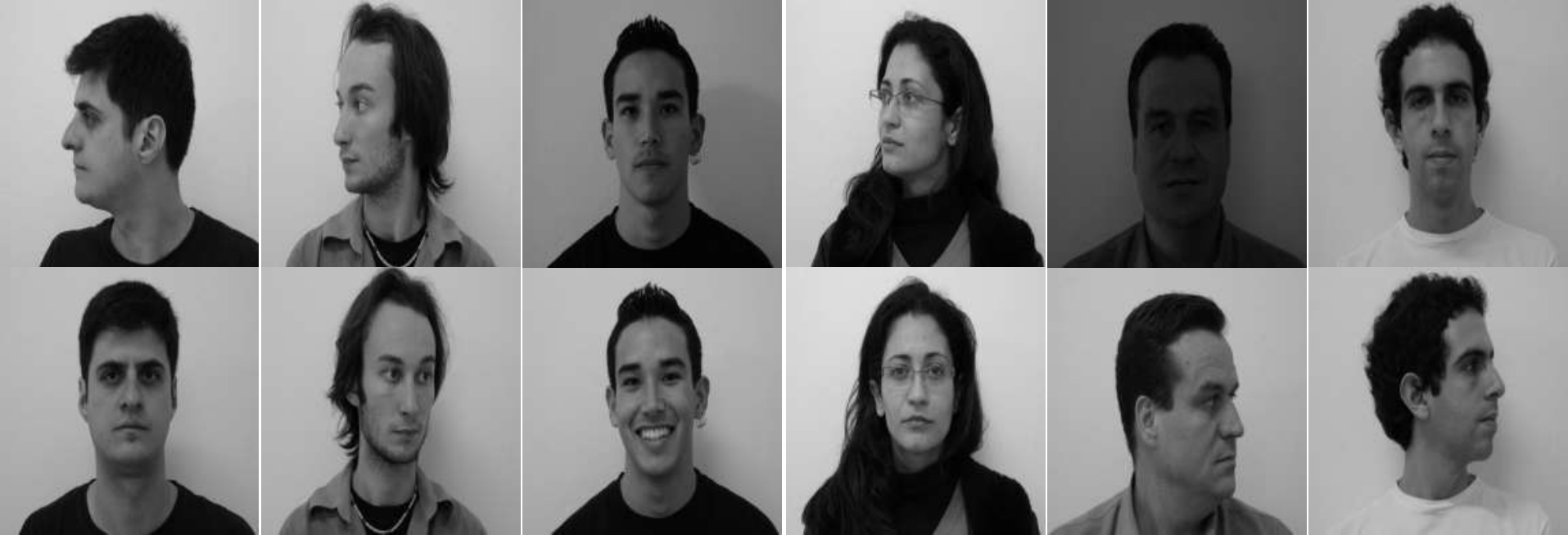}}%
\label{fig_forth_case}}
\hfil
\subfigure[Extended Yale B database]{\includegraphics[width=7cm]{{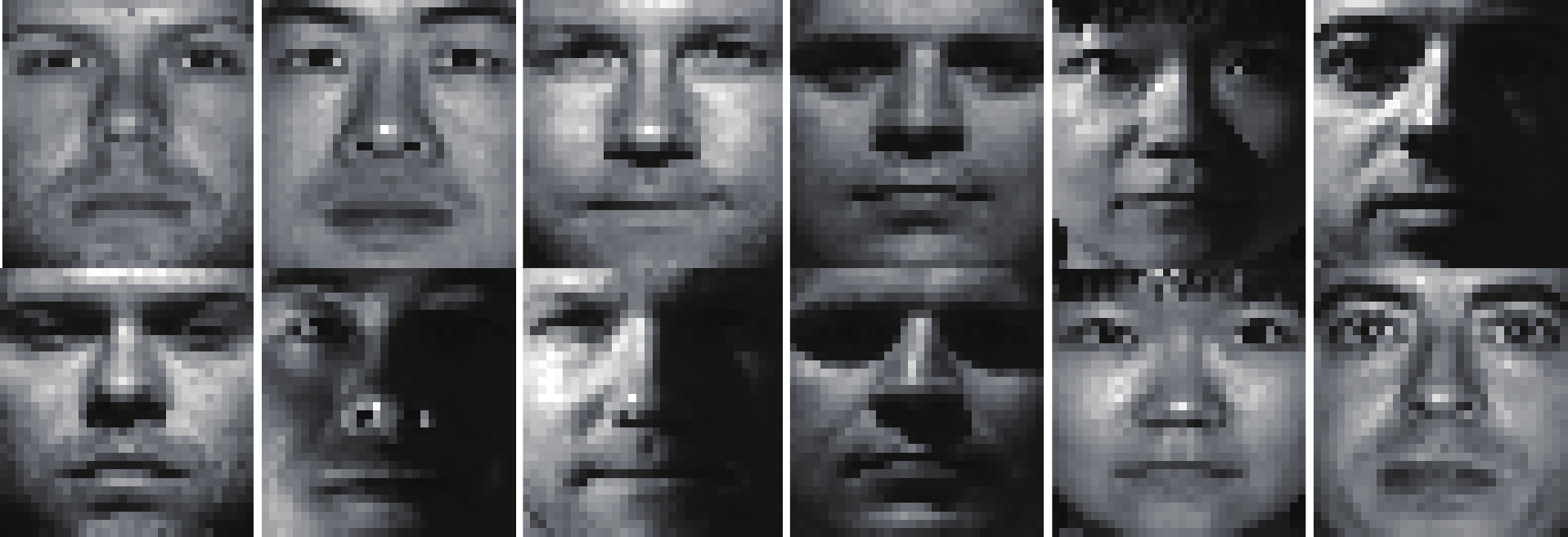}}%
\label{fig_fifth_case}}
\caption{Examples of different data sets. For the Hopkins 155 database, we simply choose some frames in the videos. The motion objects in these three frames are checkerboard, people and truck, respectively. For the FEI  and Extended Yale B database, each column represents a single subject.}
\label{Dataset}
\end{figure}

Fig. \ref{Dataset} gives some samples of these five data sets. From Fig. \ref{fig_fifth_case}, we can see that Extended Yale B is a tough database for subspace clustering due to its large noise. So we can further verify the effectiveness of our method in handling the noise. Table \ref{TDataset} gives the statistics of these databases. For the Hopkins 155 database, the values of size and dimensionality represent the average of the whole videos, and the class of each video is 2 or 3.
\begin{table}[t]
\renewcommand{\arraystretch}{1}
\caption{Statistics of five data sets.}\label{TDataset}
\begin{center}
\begin{tabular}{|c||c|c|c|}
\hline
dataset &size & dimensionality & $\#$ of classes \\
\hline
 Hopkins 155 & 59 & 296 & 2 or 3\\
\hline
USPS & 9298 & 256 & 10\\
\hline
C-Cube & 57646 & 3120 & 52\\
\hline
FEI & 700 & 768 & 50\\
\hline
Extended Yale B & 2414 & 1024 & 68\\
\hline
\end{tabular}
\end{center}
\end{table}

\begin{figure*}[!t]
\centering
\subfigure[Hopkins 155]{\includegraphics[width=3.49cm]{{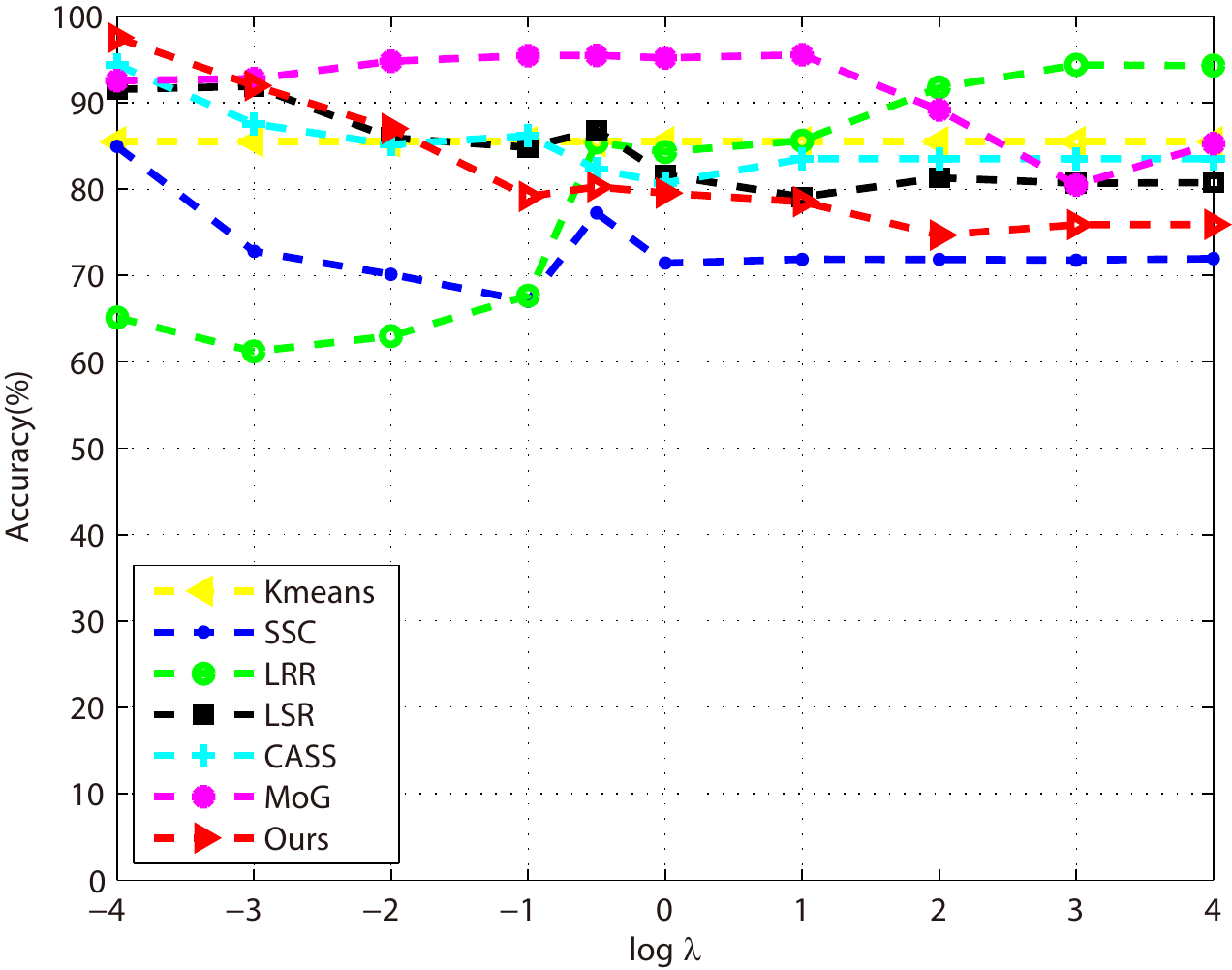}}%
\label{Para_lambda_1}}
\subfigure[USPS]{\includegraphics[width=3.49cm]{{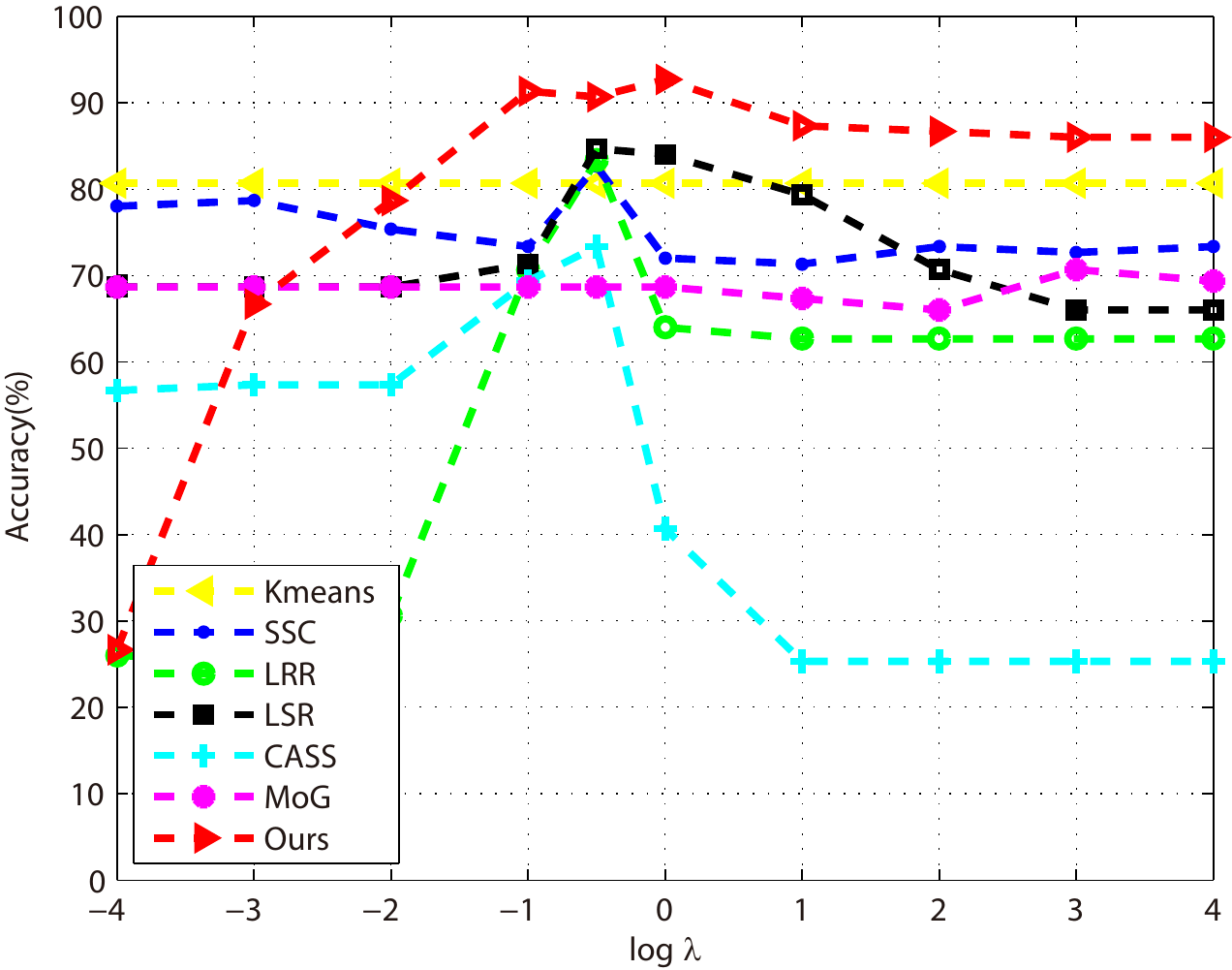}}%
\label{Para_lambda_2}}
\subfigure[C-Cube]{\includegraphics[width=3.49cm]{{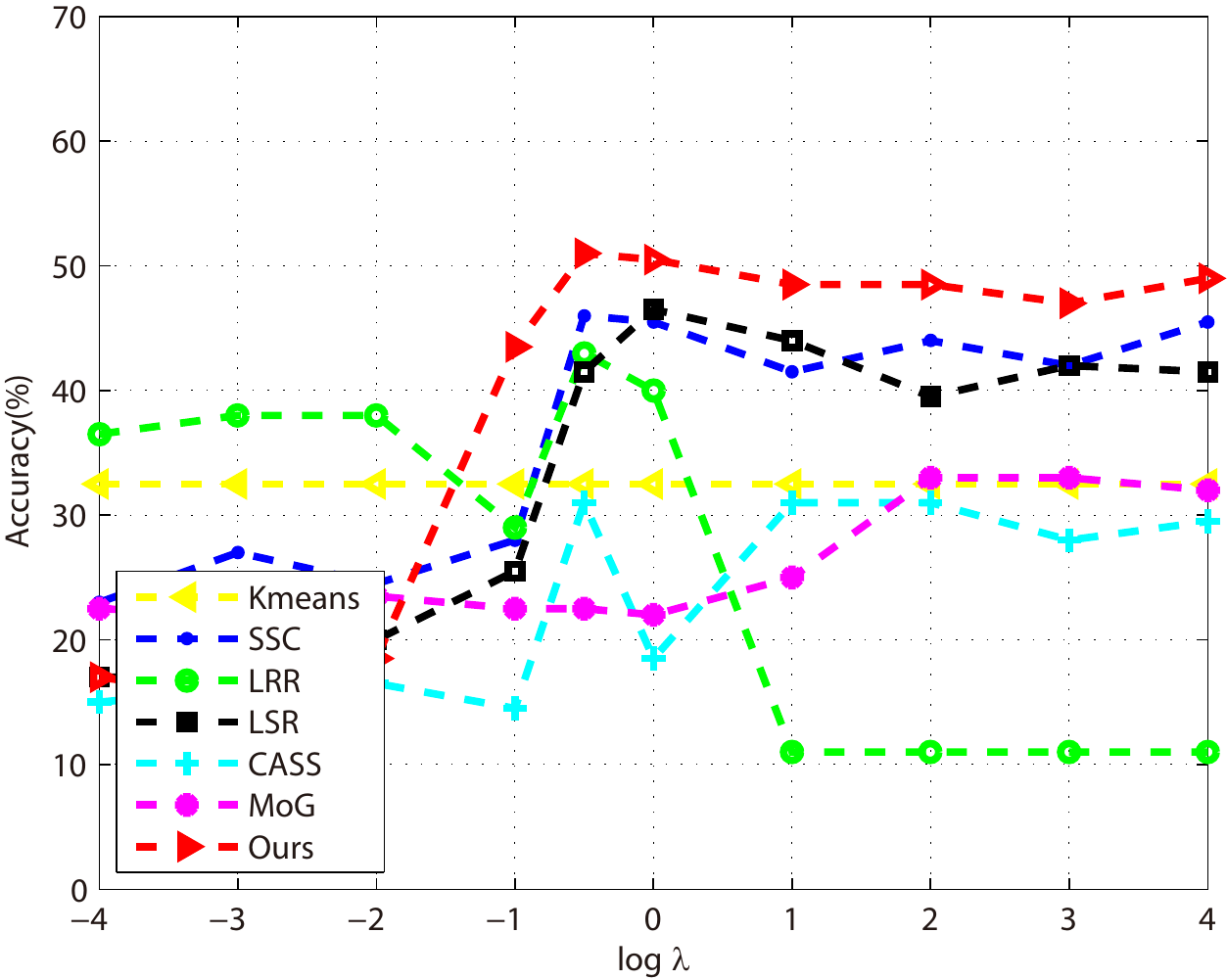}}%
\label{Para_lambda_3}}
\subfigure[FEI]{\includegraphics[width=3.49cm]{{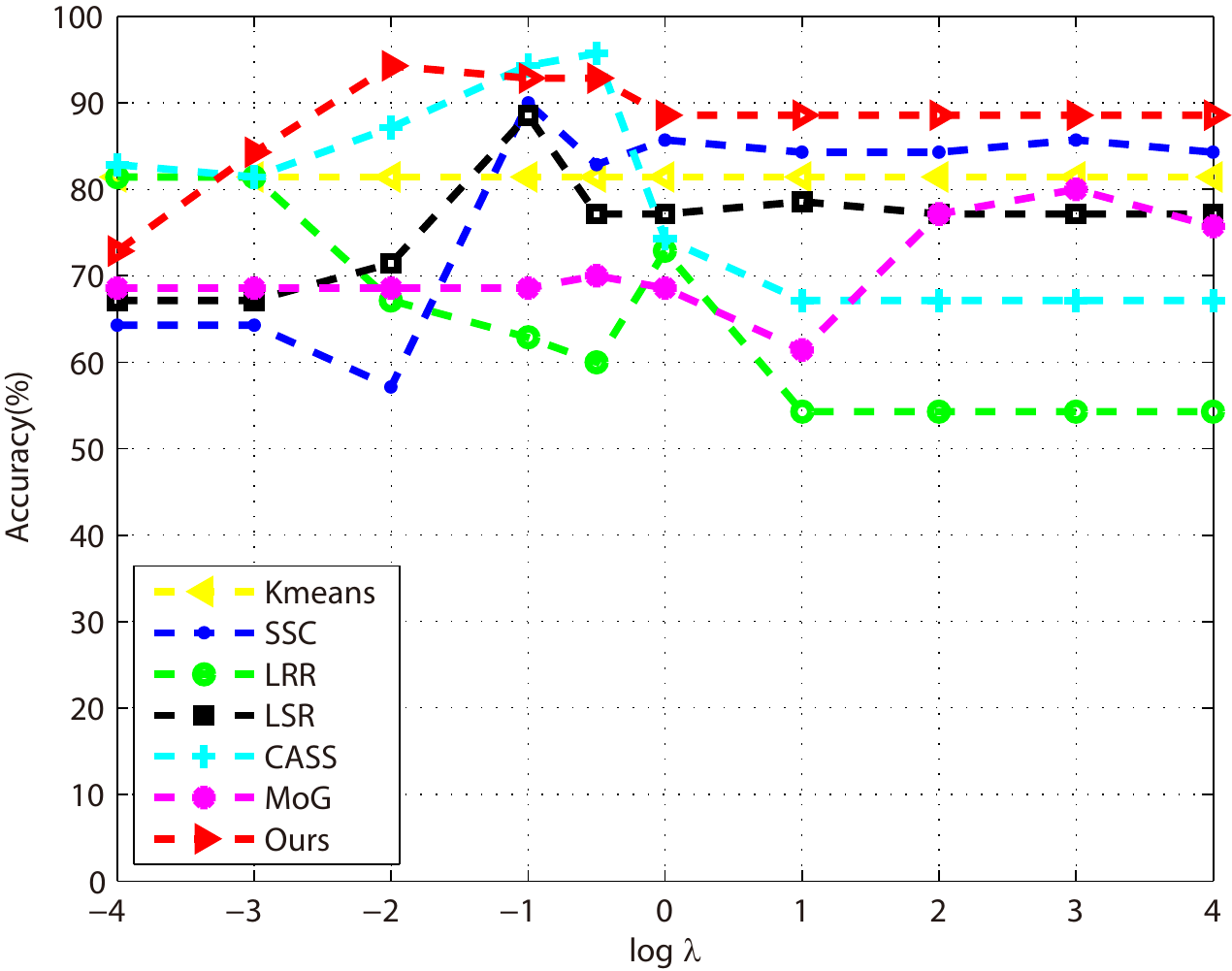}}
\label{Para_lambda_4}}
\subfigure[Extended Yale B]{\includegraphics[width=3.49cm]{{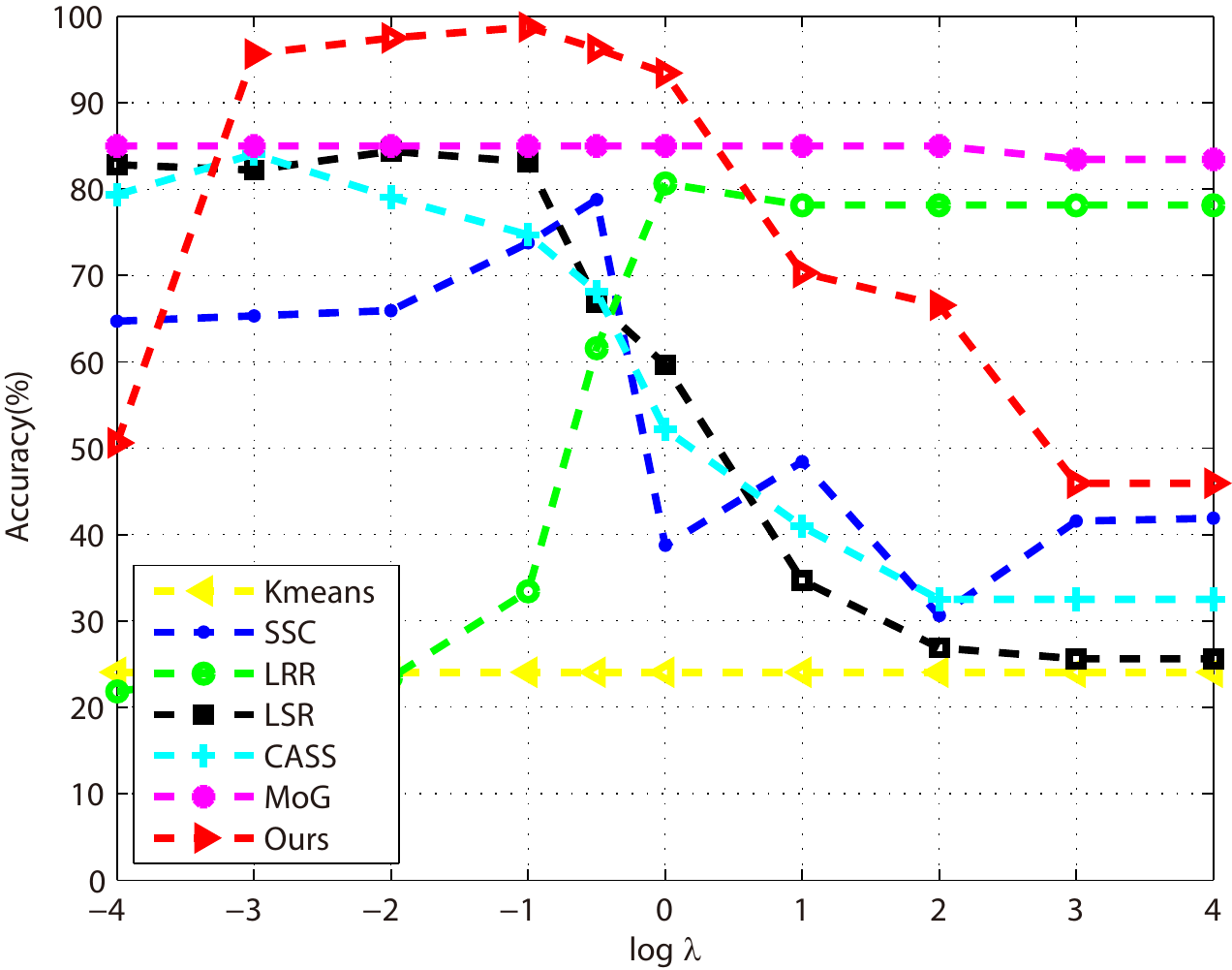}}
\label{Para_lambda_5}}

\caption{The performance of different methods versus parameter $\lambda$.}
\label{Para_lambda}
\end{figure*}

\begin{figure*}[!t]
\centering
\subfigure[Hopkins 155]{\includegraphics[width=3.49cm]{{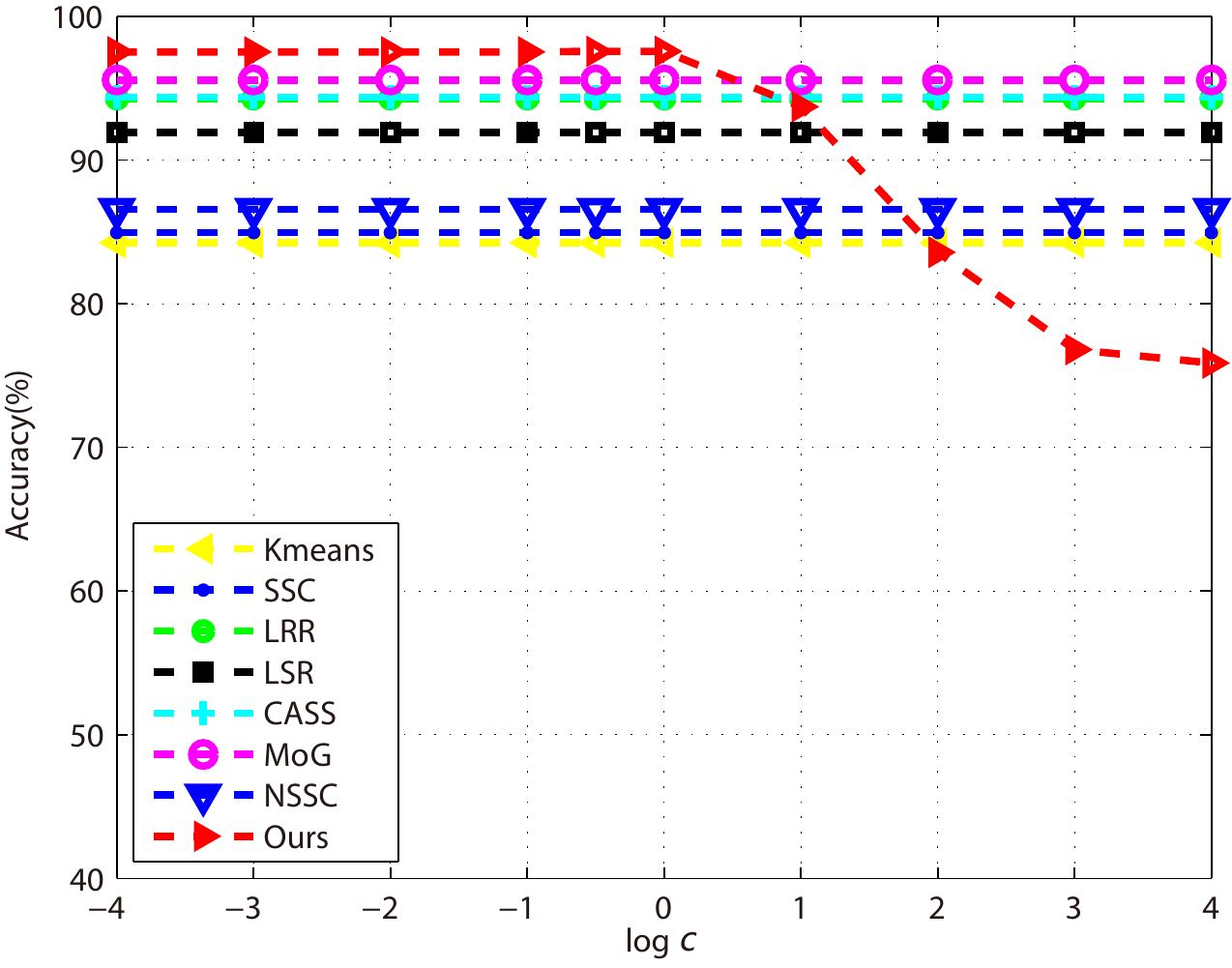}}%
\label{Para_c_1}}
\subfigure[USPS]{\includegraphics[width=3.49cm]{{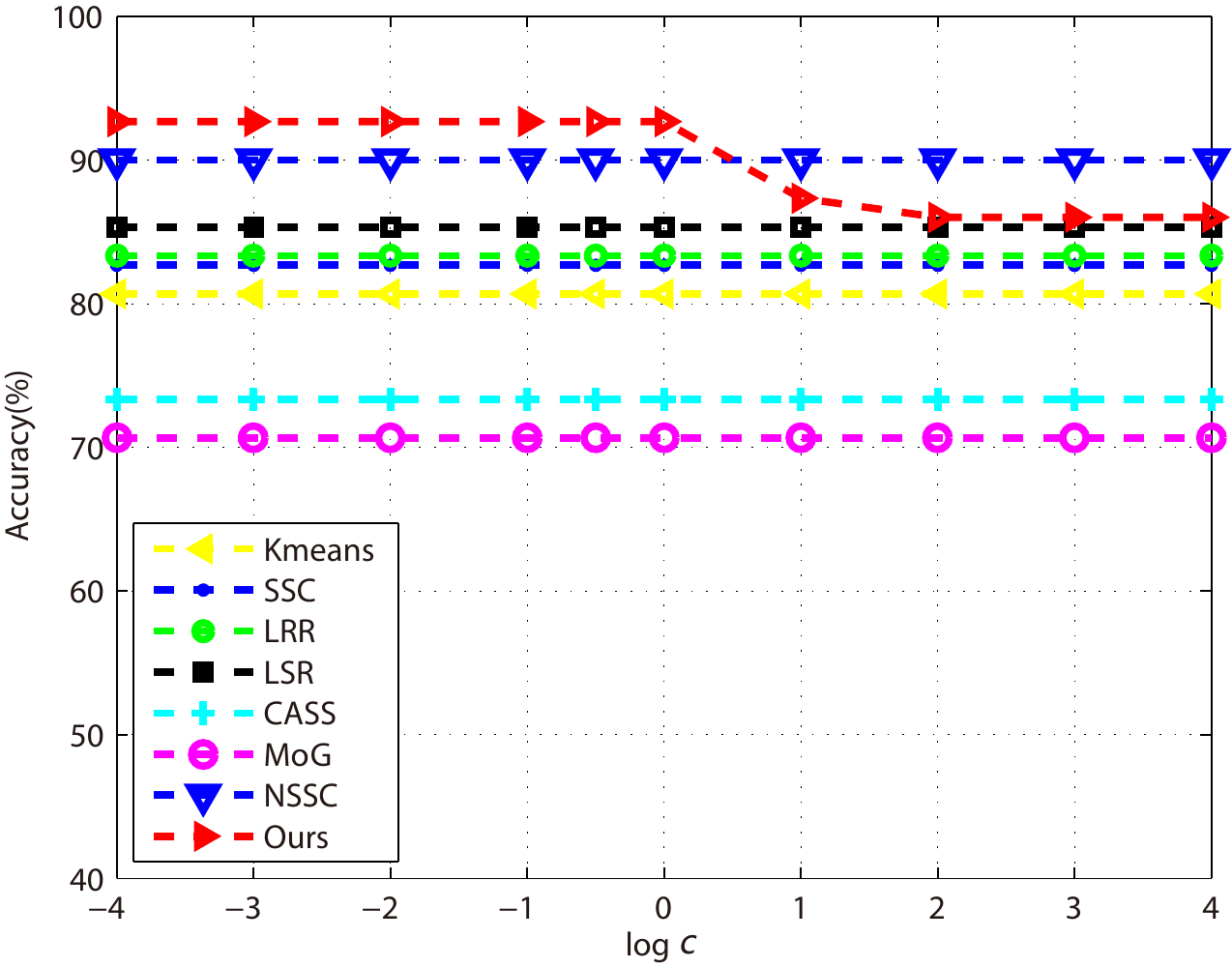}}%
\label{Para_c_2}}
\subfigure[C-Cube]{\includegraphics[width=3.49cm]{{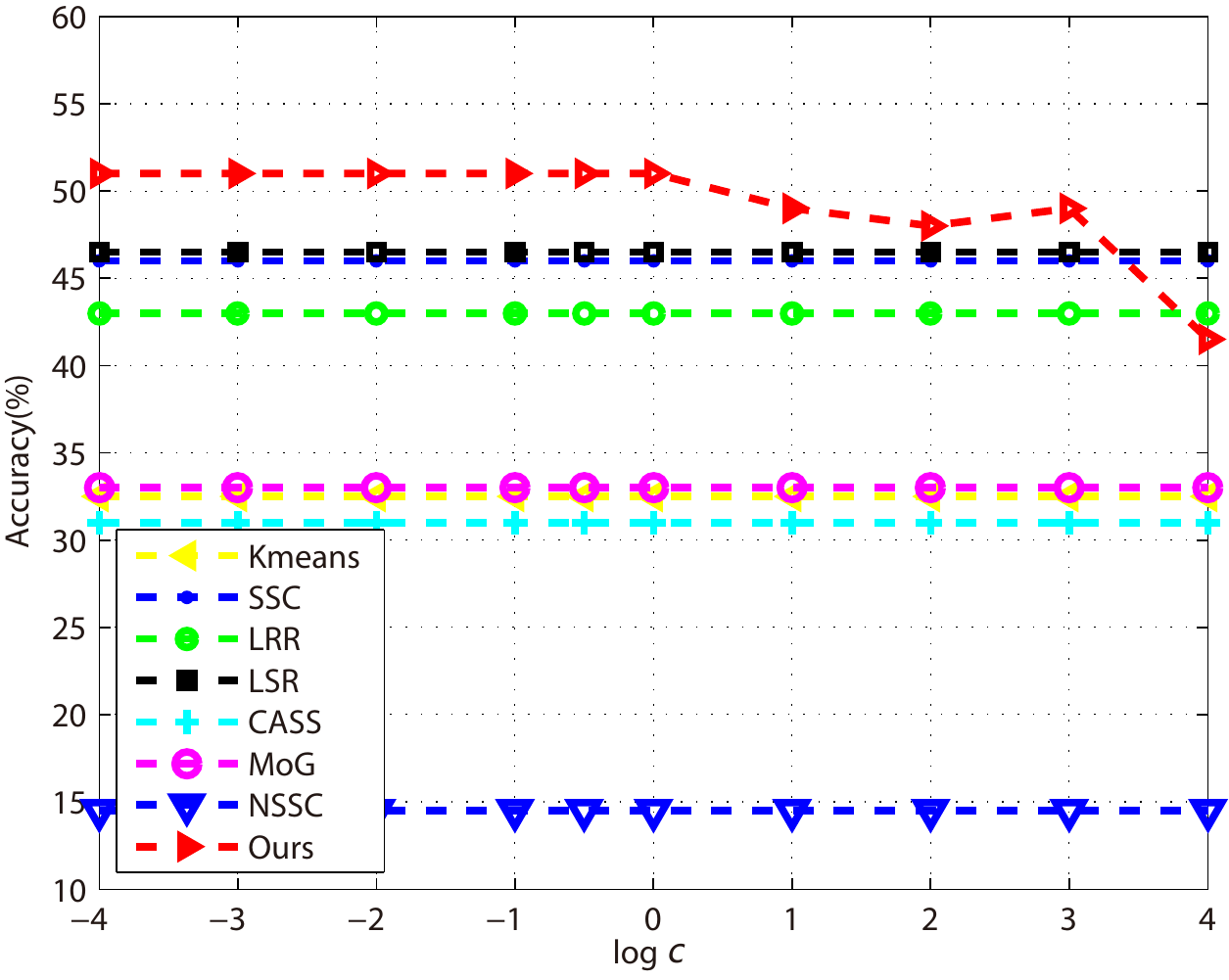}}%
\label{Para_c_3}}
\subfigure[FEI]{\includegraphics[width=3.49cm]{{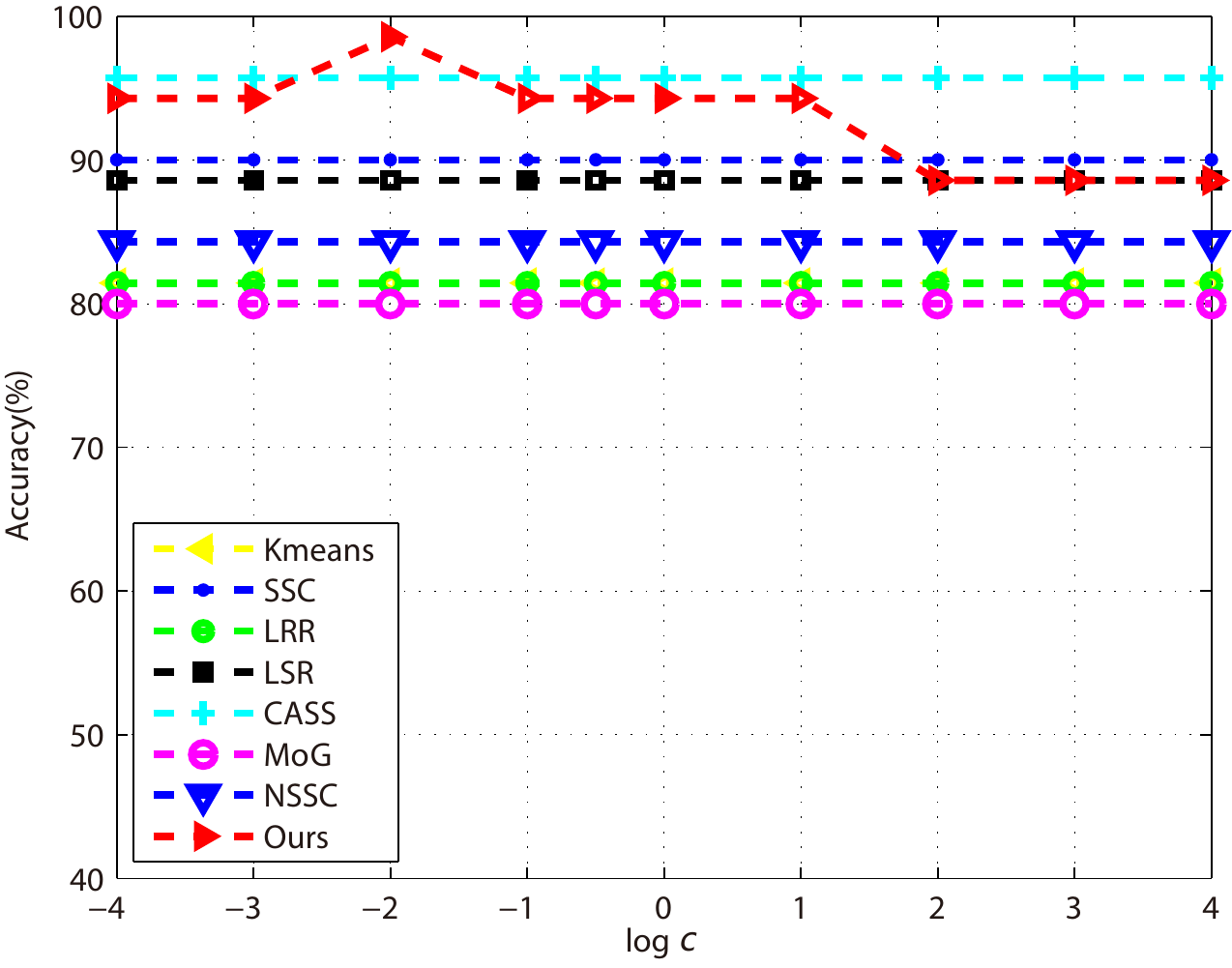}}
\label{Para_c_4}}
\subfigure[Extended Yale B]{\includegraphics[width=3.49cm]{{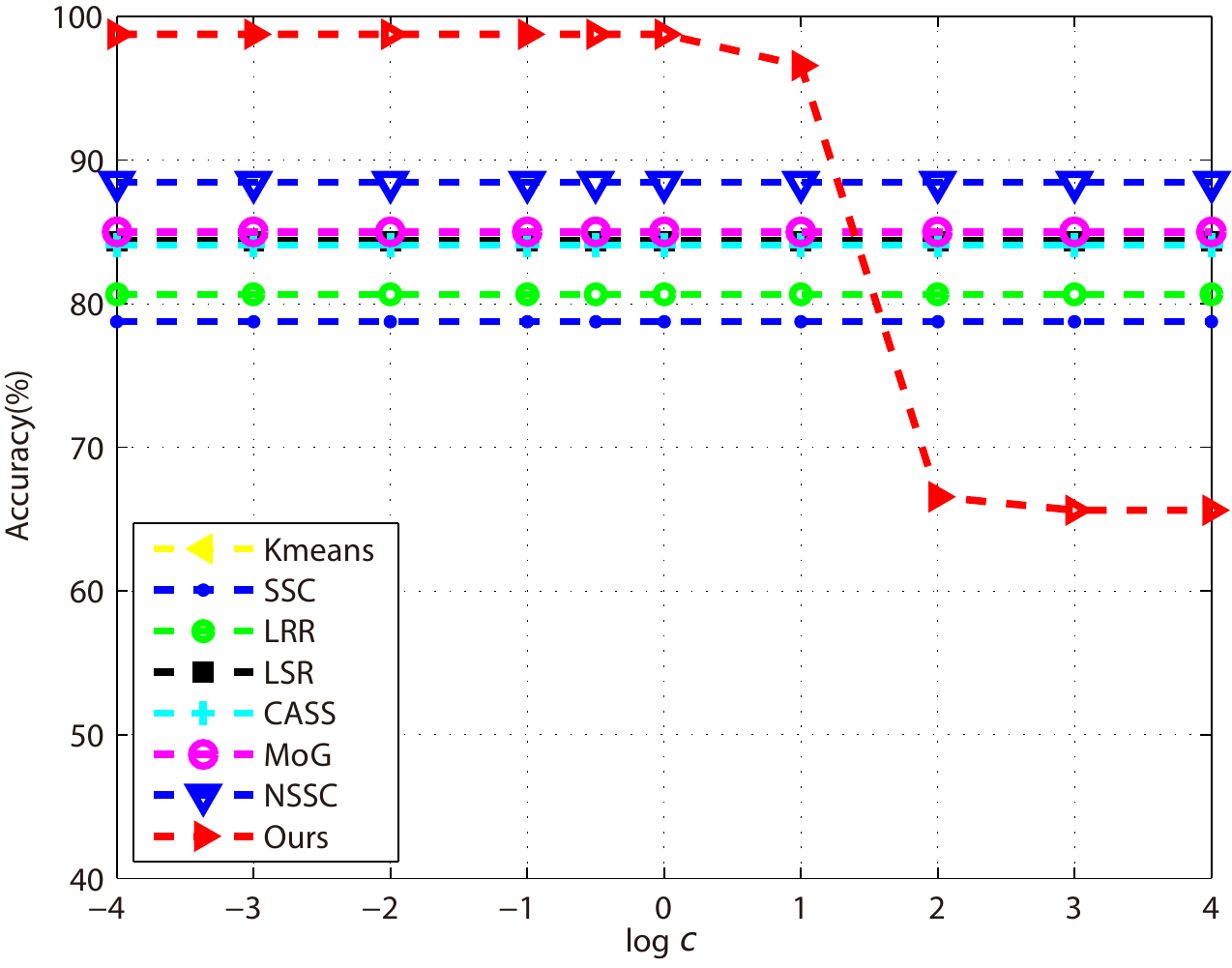}}
\label{Para_c_5}}
\caption{The performance of different methods versus parameter $c$.}
\label{Para_c}
\end{figure*}

\subsection{Evaluation Criterion}
The clustering results are evaluated by comparing the obtained label of each subspace clustering method with the groundtruth. The clustering accuracy (AC) and the normalized mutual information (NMI), as two popular metrics, are employed to measure the clustering performance.
\begin{table*}[t]
\renewcommand{\arraystretch}{1}
\caption{The best parameter for each method on different databases.}\label{Tab_para}
\begin{center}
\begin{tabular}{|c||c|c|c|c|c|c|c|}
\hline
dataset & SSC ($\lambda$) & LRR ($\lambda$) & LSR ($\lambda$) & CASS ($\lambda$) & MoG ($\lambda$) & NSSC ($\lambda$) & Ours ($\lambda$, $c$) \\
\hline
 Hopkins 155    & 0.0001 & 1000 & 0.001 & 0.0001 & 10 & 10  & (0.0001, 0.5)\\
\hline
USPS            & 0.5    & 0.5  & 0.5   & 0.5    & 1000 & 10 & (1,0.1)\\
\hline
C-Cube            & 0.5    & 0.5  & 1   & 0.5    & 1000 & 10 & (0.5,0.1)\\
\hline
FEI             & 0.1    & 0.001 & 0.1  & 0.5    & 1000 & 10 & (0.01,0.01)\\
\hline
Extended Yale B & 0.5    & 1     & 0.01 & 0.001  & 10   & 10 & (0.1,0.1) \\
\hline
\end{tabular}
\end{center}
\end{table*}

Given an obtained label vector ${\bf{o}}_i$ and a corresponding groundtruth label vector ${\bf{g}}_i$. The AC is calculated by

\begin{equation}\label{39}
AC = \frac{{\sum\limits_{j = 1}^n {\delta ({{\bf{g}}_i}(j),{{\bf{o}}_i}^\prime (j))} }}{n},
\end{equation}

\begin{equation}\label{40}
\delta (x,y) = \left\{ \begin{array}{l}
1,{\kern 1pt} {\kern 1pt} {\kern 1pt} {\kern 1pt} {\kern 1pt} {\kern 1pt} {\kern 1pt} {\kern 1pt} {\kern 1pt} {\kern 1pt} if{\kern 1pt} {\kern 1pt} x = y\\
0,{\kern 1pt} {\kern 1pt} {\kern 1pt} {\kern 1pt} {\kern 1pt} {\kern 1pt} {\kern 1pt} {\kern 1pt} else
\end{array} \right.,
\end{equation}
where ${{\bf{o}}_i}^\prime  = map({{\bf{o}}_i})$. $map({\bf{o}}_i)$ is the permutation mapping function that chooses ${\bf{g}}_i$ as a reference vector and maps each element in ${\bf{o}}_i$ to the equivalent label in ${\bf{g}}_i$. So $map({\bf{o}}_i)$ is designed for solving the problem of correspondence between two label vectors. Kuhn-Munkres algorithm can be utilized to find the best mapping.

Mutual Information (MI), as a symmetric measure to quantify the information shared between two statistical distributions, provides a degree of agreement between two clustering results. Let $c_p$ be the cluster obtained from the groundtruth ${\bf{g}}_i$ and $c'_q$ obtained from our clustering result ${\bf{o}}_i$. Then the corresponding MI is defined as follow:
\begin{equation}\label{41}
MI({\bf{g}}_i,{\bf{o}}_i) = \sum\limits_{p = 1}^k {\sum\limits_{q = 1}^{{k'}} {\frac{{{n_{pq}}}}{n}\log \left( {\frac{{\frac{{{n_{pq}}}}{n}}}{{\frac{{{n_p}}}{n} \cdot \frac{{n'_q}}{n}}}} \right)} ,}
\end{equation}
where $k$ and $k'$ denote the number of clusters in groundtruth and our clustering result, respectively. $n_{p}$ is the number of points in cluster $c_p$, $n'_q$ is the number of points in cluster $c'_q$ and $n_{pq}$ denotes the number of shared points between $c_p$ and $c'_q$. In order to obtain a normalized version of MI that ranges from 0 to 1, we use the NMI metric as
\begin{equation}\label{42}
NMI({{\bf{g}}_i},{{\bf{o}}_i}) = \frac{{2 \cdot MI({{\bf{g}}_i},{{\bf{o}}_i})}}{{H({{\bf{g}}_i}) + H({{\bf{o}}_i})}},
\end{equation}
where $H(\cdot)$ denotes the entropy function.

\begin{table*}[!t]
\renewcommand{\arraystretch}{1}
\caption{The clustering results of different algorithms on the Hopkins 155 database. The best results are in bold font.}\label{Tab_Hopkins}
\begin{center}
\begin{tabular}{|c c|C{1cm}|C{1cm}|C{1cm}|C{1cm}|C{1cm}|C{1cm}|C{1cm}|C{1cm}|}
\hline
\multirow{2}{*}{k}
  & &\multicolumn{8}{c|}{Accuracy}\\
    \cline{3-10}
& & Kmeans & SSC & LRR & LSR & CASS & MoG & NSSC & Ours \\
\hline\hline
2 motions &Average & 87.80 & 83.40 & 96.47 & 96.14 & 92.01 & \textbf{98.03} & 88.76 & 97.81 \\
Acc.(\%) & Median & 88.10 & 83.83 & 99.67 & 99.54 & 99.64 & \textbf{100.00} & 90.23 & \textbf{100.00} \\
\hline
3 motions &Average & 77.22 & 74.88 & 90.38 & 90.66 & 89.67 & 94.25 & 78.46 &  \textbf{95.03}\\
Acc.(\%) & Median & 80.42 & 75.45 & 94.57 & 92.34 & 91.43 & 97.66 & 79.90 & \textbf{99.17} \\
\hline
Total &Average & 85.55 & 81.48 & 95.08 & 94.96 & 91.55 & \textbf{97.21} & 86.37 & \textbf{97.21} \\
Acc.(\%) & Median & 85.86 & 80.84 & 99.41 & 99.06 & 97.76 & 99.71 & 88.50& \textbf{100.00} \\
\hline
\multirow{2}{*}{k}
  & &\multicolumn{8}{c|}{Normalized Mutual Information}\\
    \cline{3-10}
& & Kmeans & SSC & LRR & LSR & CASS & MoG & NSSC & Ours \\
\hline\hline
2 motions &Average & 53.96 & 40.09 & 86.53 & 79.60 & 70.42 & 86.10 & 57.37 & \textbf{87.24} \\
Acc.(\%) & Median & 44.11 & 27.96 & 96.43 & 94.92 & 96.19 & \textbf{100.00} & 57.88 & \textbf{100.00} \\
\hline
3 motions &Average & 49.69 & 43.33 & 80.19 & 76.01 & 77.67 & 83.21 & 50.22 & \textbf{86.61}\\
Acc.(\%) & Median & 47.93 & 46.90 & 80.14 & 76.86 & 79.47 & 89.17 & 47.34 & \textbf{95.41} \\
\hline
Total &Average & 53.26 & 40.96 & 85.17 & 78.85 & 72.14 & 85.50 & 55.78 & \textbf{87.12} \\
Acc.(\%) & Median & 45.14 & 34.65 & 94.42 & 92.19 & 85.59 & 96.96 & 56.23 & \textbf{100.00} \\
\hline
\end{tabular}
\end{center}
\end{table*}
\begin{table*}[!t]
\renewcommand{\arraystretch}{1}
\caption{The clustering results of different algorithms on the USPS database. The best results are in bold font.}\label{Tab_USPS}
\begin{center}
\begin{tabular}{|c|C{1cm}|C{1cm}|C{1cm}|C{1cm}|C{1cm}|C{1cm}|C{1cm}|C{1cm}|}
\hline
\multirow{2}{*}{k}
  & \multicolumn{8}{c|}{Accuracy}\\
    \cline{2-9}
  & Kmeans & SSC & LRR & LSR & CASS & MoG & NSSC & Ours \\
\hline
\hline
5 subjects & 80.67 & 82.67 & 83.33 & 85.33 & 73.33 & 70.66 & 90.00 & \textbf{92.67}\\
\hline
6 subjects & 75.00 & 82.77 & 83.89 & 80.00 & 70.00 & 62.22 & 81.67 & \textbf{87.78}\\
\hline
7 subjects & 77.14 & 80.00 & 75.24 & 80.95 & 73.81 & 58.10 & 81.90 & \textbf{83.33}\\
\hline
8 subjects & 78.75 & 79.85 & 76.25 & 79.17 & 71.25 & 54.17 & 82.08 & \textbf{86.25}\\
\hline
9 subjects & 77.78 & 80.00 & 69.63 & 80.74 & 75.56 & 55.93 & 80.00 & \textbf{85.56}\\
\hline
10 subjects & 73.00 & 67.67 & 70.00 & 76.33 & 71.00 & 55.67 & 77.00 &\textbf{81.33}\\
\hline
\multirow{2}{*}{k}
  & \multicolumn{8}{c|}{ Normalized Mutual Information}\\
    \cline{2-9}
  & Kmeans & SSC & LRR & LSR & CASS & MoG & NSSC & Ours \\
\hline
\hline
5 subjects & 66.10 & 71.47 & 72.57 & 69.00 & 66.76 & 45.52 & 76.86 & \textbf{82.86}\\
\hline
6 subjects & 60.69 & 63.86 & 73.84 & 65.74 & 62.42 & 46.37 & 70.64 & \textbf{77.56}\\
\hline
7 subjects & 64.45 & 64.88 & 64.49 & 68.90 & 63.92 & 42.24 & 74.02 & \textbf{74.63}\\
\hline
8 subjects & 68.48 & 72.22 & 66.69 & 70.29 & 64.03 & 42.13 & 74.96 & \textbf{80.07}\\
\hline
9 subjects & 67.28 & 68.63 & 67.37 & 71.49 & 66.23 & 46.87 & 74.31 & \textbf{78.87}\\
\hline
10 subjects & 63.26 & 59.93 & 63.95 & 68.87 & 62.48 & 45.54 & 69.93 &\textbf{74.86}\\
\hline
\end{tabular}
\end{center}
\end{table*}

\subsection{Parameter Selection}
Our proposed method has two essential parameters: the weight factor $\lambda$ and a constant $c$. Then we conduct the corresponding comparison experiments to choose the best parameter for each method on the whole databases. To reduce the memory consumption in our experiments, we only use the first five videos of the Hopkins 155 database to choose the appropriate parameters. For the USPS, NSSC, FEI and Extended Yale B databases, we use the first five subjects to select the parameters, respectively. Besides, we set the range of $\lambda$ and $c$ as $[10^{-4},10^4]$.

Fig. \ref{Para_lambda} gives the performance of different methods with the parameter $\lambda$.
For NSSC, when $\lambda < 1$, its optimization method usually fails to give a local optimal solution, and it can give a good performance when $\lambda = 10$. So we fix $\lambda = 10$ for NSSC on the whole datasets.
Note that our method can give a best performance when $\lambda=10^{-4}$ on Hopkins 155 database. Hence, we fix $\lambda=10^{-4}$ for our method on the Hopkins 155. For the USPS database, our method obtains a better performance than other methods when $\lambda$ is larger than 0.01 and gives the largest CI when $\lambda = 1$. For C-Cube, our proposed method gives the best clustering result when $\lambda = 0.5$. For FEI and Extended Yale B databases, our method shows its effectiveness when $\lambda$ is around 0.01. Compared with other methods, MoG can give a stable performance on these five databases with respect to the parameter $\lambda$ while it gives a bad clustering accuracy on the USPS and FEI databases. For the USPS and Extended Yale B databases, the curve of LRR and CASS both give a bigger fluctuation. Because Kmeans has no parameter, its accuracy curve is a straight line. Note that the Kmeans algorithm gives a very low performance on the Extended Yale B database.

\begin{table*}[!t]
\renewcommand{\arraystretch}{1}
\caption{The clustering results of different algorithms on the C-Cube database. The best results are in bold font.}\label{Tab_CCC}
\begin{center}
\begin{tabular}{|c|C{1cm}|C{1cm}|C{1cm}|C{1cm}|C{1cm}|C{1cm}|C{1cm}|C{1cm}|}
\hline
\multirow{2}{*}{k}
  & \multicolumn{8}{c|}{Accuracy}\\
    \cline{2-9}
  & Kmeans & SSC & LRR & LSR & CASS & MoG & NSSC & Ours \\
\hline
\hline
10 subjects & 32.50 & 46.50 & 43.00 & 45.50 & 22.50 & 33.00 & 14.50 & \textbf{51.00}  \\
\hline
20 subjects & 26.25 & \textbf{44.00} & 32.25 & 33.25 & 27.75 & 26.25 & 21.00 & 37.50  \\
\hline
30 subjects & 24.33 & 32.50 & 29.67 & 34.33 & 27.67 & 24.67 & 28.83 & \textbf{35.17} \\
\hline
40 subjects & 22.87 & 30.50 & 28.00 & 28.75 & 25.86 & 24.50 & 28.62 & \textbf{32.37} \\
\hline
50 subjects & 23.70 & 29.50 & 28.10 & 32.30 & 26.60 & 26.70 & 25.70 & \textbf{32.40} \\
\hline
\multirow{2}{*}{k}
  & \multicolumn{8}{c|}{ Normalized Mutual Information}\\
    \cline{2-9}
  & Kmeans & SSC & LRR & LSR & CASS & MoG & NSSC & Ours \\
\hline
\hline
10 subjects & 30.22 & 36.86 & 37.56 & 43.70 & 17.26 & 28.32 & 10.45 & \textbf{46.61} \\
\hline
20 subjects & 35.16 & 44.36 & 42.97 & 41.86 & 36.53 & 29.04 & 29.28 & \textbf{45.19} \\
\hline
30 subjects & 37.96 & 40.46 & 44.96 & 45.72 & 41.31 & 35.24 & 42.67 & \textbf{48.51} \\
\hline
40 subjects & 40.77 & 40.50 & 47.27 & 47.26 & 43.77 & 40.87 & 45.19 & \textbf{48.79} \\
\hline
50 subjects & 43.43 & 44.50 & 49.76 & \textbf{51.80} & 47.75 & 45.14 & 45.59 & 51.58 \\
\hline
\end{tabular}
\end{center}
\end{table*}

\begin{table*}[!t]
\renewcommand{\arraystretch}{1}
\caption{The clustering results of different algorithms on the FEI database. The best results are in bold font.}\label{Tab_FEI}
\begin{center}
\begin{tabular}{|c|C{1cm}|C{1cm}|C{1cm}|C{1cm}|C{1cm}|C{1cm}|C{1cm}|C{1cm}|}
\hline
\multirow{2}{*}{k}
  & \multicolumn{8}{c|}{Accuracy}\\
    \cline{2-9}
  & Kmeans & SSC & LRR & LSR & CASS & MoG & NSSC & Ours \\
\hline
\hline
5 subjects & 81.43 & 90.00 & 81.43 & 88.57 & 95.71 & 80.00 & 84.29 & \textbf{98.57}\\
\hline
10 subjects & 65.00 & 71.43 & 70.71 & 72.14 & 80.00 & 66.43 & 70.00 & \textbf{85.71}\\
\hline
15 subjects & 68.57 & 80.00 & 69.05 & 65.23 & 78.57 & 62.38 & 71.90 & \textbf{82.38}\\
\hline
20 subjects & 66.79 & 73.93 & 71.43 & 70.00 & \textbf{75.36} & 65.36 & 71.01 & 72.50\\
\hline
30 subjects & 64.52 & \textbf{76.19} & 59.29 & 65.48 & 67.86 & 66.67 & 66.43 & 69.29\\
\hline
40 subjects & 61.07 & \textbf{77.14} & 57.86 & 64.46 & 65.36 & 63.75 & 66.07 & 66.07\\
\hline
\multirow{2}{*}{k}
  & \multicolumn{8}{c|}{ Normalized Mutual Information}\\
    \cline{2-9}
  & Kmeans & SSC & LRR & LSR & CASS & MoG & NSSC & Ours \\
\hline
\hline
5 subjects & 80.51 & 82.33 & 77.65 & 81.95 & 93.24 & 69.24 & 76.57 & \textbf{96.77}\\
\hline
10 subjects & 70.03 & 70.21 & 76.47 & 74.20 & 77.58 & 63.70 & 73.02 & \textbf{89.44}\\
\hline
15 subjects & 79.85 & 79.40 & 77.90 & 69.72 & 83.74 & 66.26 & 78.59 & \textbf{85.85}\\
\hline
20 subjects & 79.49 & 77.74 & 81.34 & 74.72 & \textbf{81.93} & 71.80 & 75.14 & 80.64\\
\hline
30 subjects & 77.37 & \textbf{81.64} & 76.76 & 75.27 & 79.99 & 75.65 & 78.59 & 81.22\\
\hline
40 subjects & 78.29 & \textbf{83.48} & 76.54 & 76.08 & 0.7906 & 76.59 & 78.67 & 80.48\\
\hline
\end{tabular}
\end{center}
\end{table*}
For the parameter $c$, we can see that the comparison methods have no parameter $c$ and always give a straight line. Note that our method can give the best performance when $c$ is smaller than 1 on the Hopkins 155, USPS, C-Cube and Extended Yale B databases. Especially for Extended Yale B, the accuracy of our method is almost 100 percent. For FEI, the accuracy of our method is highest when $c=0.01$. Therefore, our method has the ability to achieve the best performance for the whole databases. Note that when the value of $c$ is larger than 0 or 1, the performance of our method tends to decrease rapidly. From our objective function (\ref{15}), we can see that when parameter $c$ increases, the noise term can be very small for all situations which directly reduces the ability of our objective function to suppress the large noise.
Hence, using Cauchy loss function to deal with the noise term is powerful to reduce the influence of the noise on subspace clustering. The best parameters of each method for the experiments on the whole databases are listed in Table \ref{Tab_para}.
\begin{table*}[!t]
\renewcommand{\arraystretch}{1}
\caption{The clustering results of different algorithms on the Extended Yale B database. The best results are in bold font.}\label{Tab_YaleB}
\begin{center}
\begin{tabular}{|c|C{1cm}|C{1cm}|C{1cm}|C{1cm}|C{1cm}|C{1cm}|C{1cm}|C{1cm}|}
\hline
\multirow{2}{*}{k}
  & \multicolumn{8}{c|}{Accuracy}\\
    \cline{2-9}
  & Kmeans & SSC & LRR & LSR & CASS & MoG & NSSC & Ours \\
\hline
\hline
5 subjects & 24.06 & 78.75 & 80.63 & 84.36 & 84.06 & 85.00 & 88.44 & \textbf{95.00}\\
\hline
8 subjects & 15.63 & 60.74 & 60.55 & 75.78 & 72.46 & \textbf{83.59} & 58.01 & \textbf{83.59}\\
\hline
10 subjects & 13.59 & 60.47 & 60.62 & 66.09 & 75.00 & 62.78 & 49.69 & \textbf{80.31}\\
\hline
\multirow{2}{*}{k}
  & \multicolumn{8}{c|}{ Normalized Mutual Information}\\
    \cline{2-9}
  & Kmeans & SSC & LRR & LSR & CASS & MoG & NSSC & Ours \\
\hline
\hline
5 subjects & 1.24 & 69.51 & 64.39 & 73.10 & 73.17 & 69.22 & 78.20 & \textbf{90.65}\\
\hline
8 subjects & 0.69 & 56.78 & 55.68 & 69.27 & 66.90 & 76.78 & 52.72 & \textbf{78.00}\\
\hline
10 subjects & 1.20 & 58.66 & 56.15 & 57.81 & 72.50 & 62.78 & 47.69 & \textbf{77.37}\\
\hline
\end{tabular}
\end{center}
\end{table*}

\begin{table*}[t]
\renewcommand{\arraystretch}{1}
\caption{Computation time of different algorithms on the FEI dataset as a function of the number of subjects.}\label{Tab_runningtime}
\begin{center}
\begin{tabular}{|c||c|c|c|c|c|c|c|c|}
\hline
k & Kmeans & SSC & LRR & LSR & CASS & MoG & NSSC & Ours \\
\hline
5 subjects & 0.03 & 38.27 & 1.09 & 0.04 & 2.27 & 11.38 & 0.13 & 0.32\\
\hline
10 subjects & 0.09 & 84.68 & 1.22 & 0.11 & 10.77 & 72.01 & 0.22 & 0.51 \\
\hline
15 subjects & 0.18 & 149.33 & 1.71 & 0.17 & 31.13 & 292.68 & 0.36 & 0.84 \\
\hline
20 subjects & 0.29 & 239.44 & 2.39 & 0.26 & 60.34 & 728.12 & 0.56 & 1.40 \\
\hline
30 subjects & 0.63 & 495.24 & 3.87 & 0.57 & 159.61 & 3247.61 & 1.13 & 2.45  \\
\hline
40 subjects & 1.10 & 893.98 & 6.08 & 1.29 & 321.43 & 12637.33 & 2.22 & 5.25 \\
\hline
\end{tabular}
\end{center}
\end{table*}

\subsection{Experimental results}

Table \ref{Tab_Hopkins}, \ref{Tab_USPS}, \ref{Tab_CCC}, \ref{Tab_FEI} and \ref{Tab_YaleB} give the experimental results of different methods on the Hopkins 155, USPS, C-Cube, FEI and Extended Yale B databases, respectively.
From Table \ref{Tab_Hopkins}, we can see that MoG and our method give the best performance on the average accuracy of the whole videos. But the corresponding NMI of MoG is lower than our proposed method. For the 3 motions situation, our method gives the best results both on the metrics AC and NMI. The medians of our method for 2 motions and total cases can reach 100 percent which shows the superiority of our proposed method. Although the accuracy of MoG is slightly bigger than our method for 2 motions, our method gives better quality clustering results through balancing all the cases.
For the USPS data set, our method outperforms other algorithms for the whole situations. Especially for the case of 5 subjects, the accuracy of our method is more than 7 percent better than the second best result. Note that the Kmeans algorithm gives the better performance than CASS and MoG on the USPS database, which means the handwritten digit data perhaps lacks the subspace structure. Even so, our method still shows its effectiveness on this data set.
For C-Cube, we can see that SSC shows good performance for 20 subjects based on AC, and LSR gives the highest NMI for 50 subjects. However, our method outperforms other methods in eight out of ten total cases. In particular, the AC value of our method is more than 4 percent higher than the second best result.
From Table \ref{Tab_FEI}, we can see that SSC outperforms other methods for 30 and 40 subjects, and CASS gives the best performance with 20 subjects. These subspace clustering results can be attributed to the subspace preserving of sparseness. For the remaining cases, our method can achieve the best clustering results. In particular, the accuracy of our method is nearly 99 percent for the 5 subjects.
Table \ref{Tab_YaleB} shows the clustering results on the Extended Yale B database. It shows that our method outperforms state-of-the-art methods for all these three clustering tasks, and MoG gives the same accuracy with our method for 8 subjects. Especially for the case of 5 subjects, the accuracy of our method is higher than the second best result by 10 percent which is a significant improvement. Note that Kmeans gives a very bad performance on the Extended Yale B database which means that the performance of Kmeans algorithm is easily influenced by the noise in the data. As stated in section \ref{ExperimentA}, the  Extended Yale B database contains the large noise. Therefore, this experiment can further verify the effectiveness of our method in handling the noise.

In summary, our proposed method is more robust to the noise and outperforms other state-of-the-art methods on the whole databases. It is sufficient to verify that our method is capable of finding the underlying subspace structure and clustering the data points into their subspaces.

\subsection{Computational Complexity Analysis}

As shown in Algorithm \ref{alg1}, the computation cost of our iterative algorithm depends on the computation of $\bf{Z}$, $Q$ and $\bf{R}$. The main computation cost of $\bf{Z}$ is the computation of ${\left( {{Q}^{t + 1}}{{{\bf{X}}^T}{\bf{X}} + 2\lambda {\bf{I}}} \right)^{ - 1}}$ which is $\mathbf{O}({n^3})$. For $Q$, its time cost is the computation of $\left\| {{{\bf{R}}^{t + 1}}} \right\|_F^2$ which is $\mathbf{O}({n^2})$. The computational cost for $\bf{R}$ is $\mathbf{O}(d{n^2})$. Therefore, the overall time complexity of our optimization method is $\mathbf{O}(t{n^3}+td{n^2})$, where $t$ denotes the number of iterations.

Furthermore, we give the computation time of different algorithms. Due to space limit, we only report the running time of all compared methods on the FEI data set which is shown in Table \ref{Tab_runningtime}. Note that the results are based on the codes implemented by their authors.  The calculations are performed using an Intel(R) Core(TM) i3-2130 CPU @ 3.40GHz with 16.00GB memory and 64-bit Windows7 operating system. It can be seen that the computation time of LSR is lower than other subspace clustering methods. This comes from the fact that LSR can directly obtain a closed-form solution without using an iterative way. However, SSC, CASS and MoG consume more time than other methods. Especially for MoG, its computation time increases drastically in the number of subjects. As for LRR, NSSC and our method, the computational cost of them is moderate for all situations.



\section{Conclusion \label{section 6}}
In this paper, we propose a robust subspace clustering method based on Cauchy loss function (CLF). To this end, we use CLF to penalize the noise term for suppressing the large noise mixed in the real data. Due to that the CLF's influence function has a upper bound, it can alleviate the influence of a single sample, especially the sample with a large noise, on estimating the residuals. Furthermore, we theoretically prove the grouping effect of our proposed method, and present its convergence analysis. Finally, experimental results on five real datasets reveal that our proposed method outperforms several representative methods.
\ifCLASSOPTIONcaptionsoff
  \newpage
\fi



%
{
\bibliographystyle{IEEEtran}
\bibliography{egbib}
}

\begin{IEEEbiographynophoto}{Xuelong Li}
(M'02-SM'07-F'12) is a full professor with the Xi'an Institute of Optics and Precision Mechanics, Chinese Academy of Sciences, Xi'an 710119, Shaanxi, P. R. China.
\end{IEEEbiographynophoto}
\ \\
\ \\
\ \\

\begin{IEEEbiography}[{\includegraphics[width=1in,height=1.32in,clip,keepaspectratio]{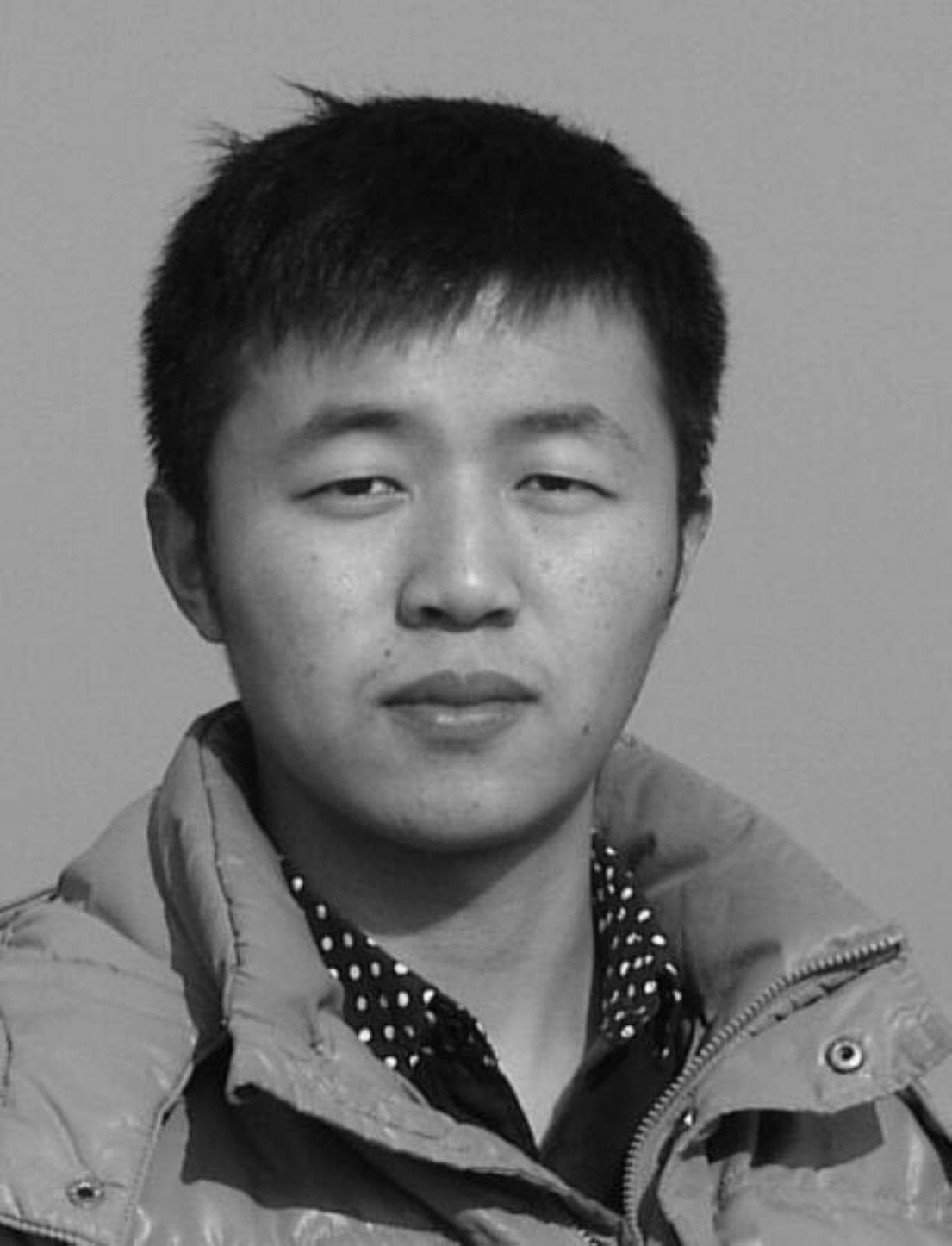}}]{Quanmao Lu}
is currently working toward the Ph.D. degree in the Center for Optical Imagery Analysis and Learning, Xi'an Institute of Optics and Precision Mechanics, Chinese Academy of Sciences, Xi'an, China.
His current research interests include machine learning and computer vision.
\end{IEEEbiography}

\ \\

\begin{IEEEbiography}[{\includegraphics[width=1in,height=1.32in,clip,keepaspectratio]{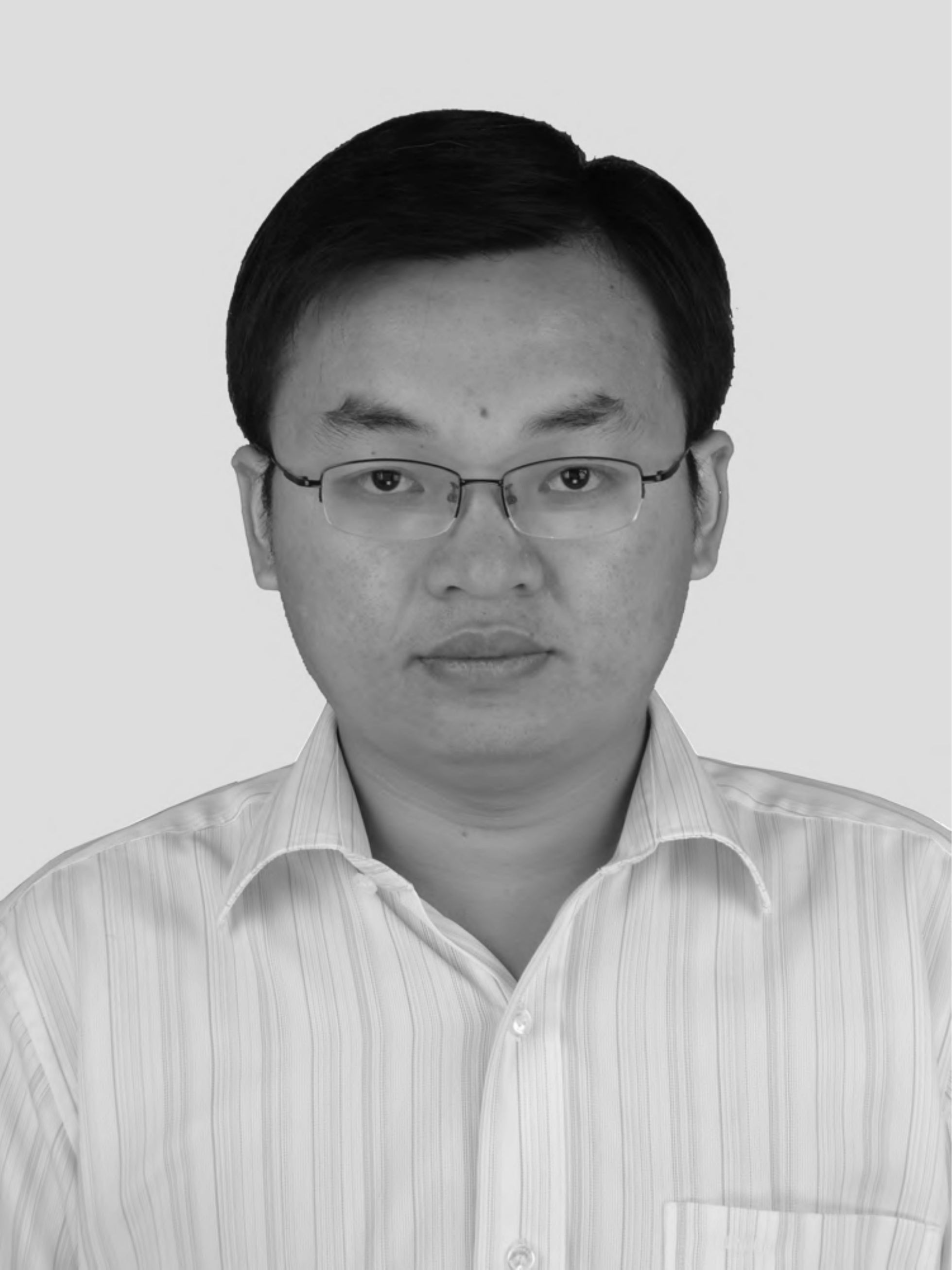}}]{Yongsheng Dong}%
(M'14) received his Ph. D. degree in applied mathematics from Peking University in 2012. He was a postdoctoral research fellow with the Center for Optical Imagery Analysis and Learning, Xi'an Institute of Optics and Precision Mechanics, Chinese Academy of Sciences, Xi'an, China from 2013 to 2016. From 2016 to 2017, he was a visiting research fellow at the School of Computer Science and Engineering, Nanyang Technological University, Singapore. He is currently an associate professor with the School of Information Engineering, Henan University of Science and Technology, China. His current research interests include pattern recognition, machine learning, and computer vision.

\end{IEEEbiography}

\begin{IEEEbiography}[{\includegraphics[width=1in,height=1.32in,clip,keepaspectratio]{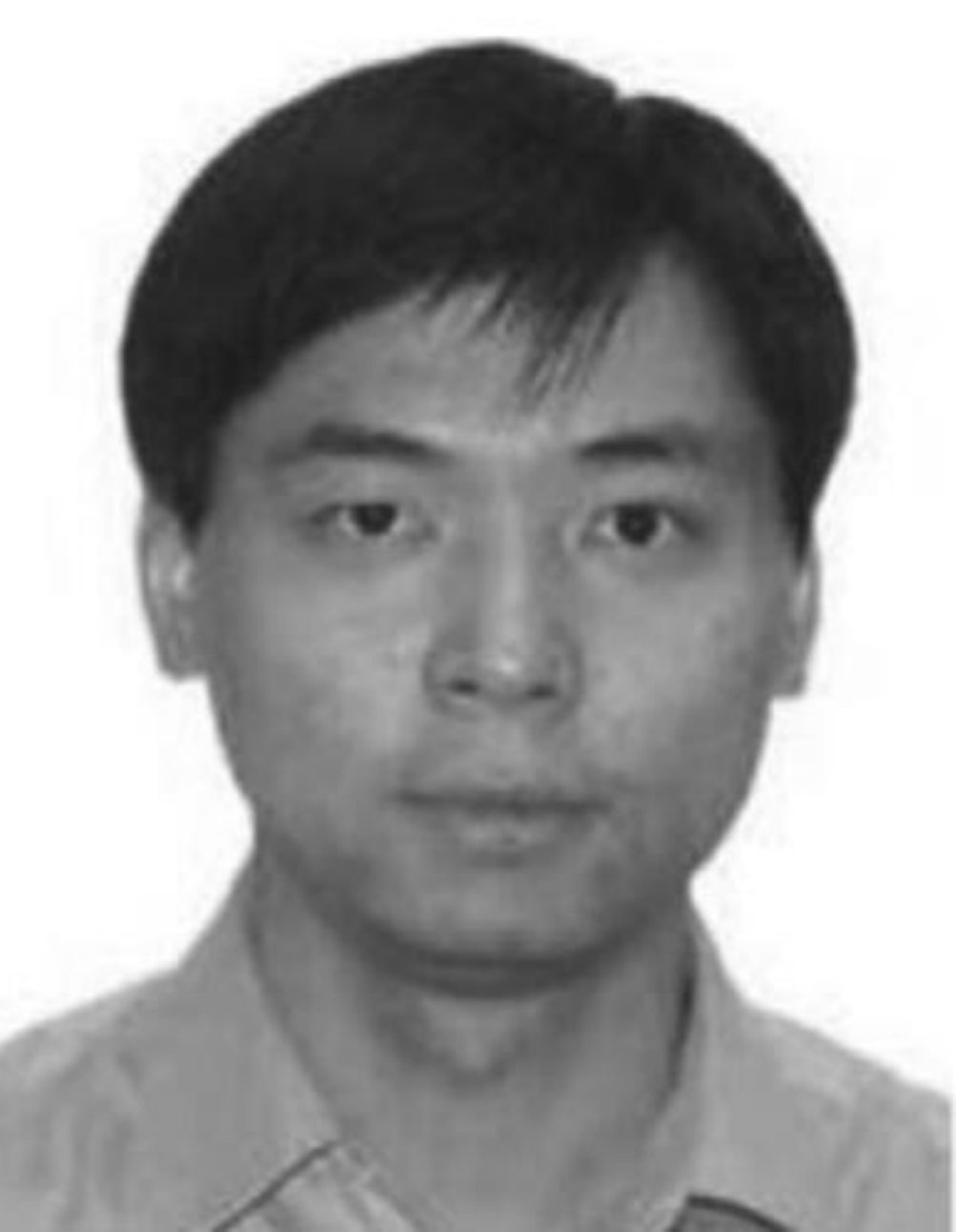}}]{Dacheng Tao}%
(F'15) was a Professor of Computer
Science and the Director of the Centre for Artificial
Intelligence, University of Technology Sydney,
Ultimo, NSW, Australia. He is currently a Professor
of Computer Science and an ARC Future
Fellow with the Faculty of Engineering and Information
Technologies, School of Information Technologies,
The University of Sydney, Sydney, NSW,
Australia, where he is also the Founding Director of
the UBTech Sydney Artificial Intelligence Institute.
He mainly applies statistics and mathematics to
artificial intelligence and data science. His current research interests include
computer vision, data science, image processing, machine learning, and video
surveillance. His research results have expounded in one monograph and
over 500 publications at prestigious journals and prominent conferences,
such as the IEEE TRANSACTIONS ON PATTERN ANALYSIS AND MACHINE
INTELLIGENCE, the IEEE TRANSACTIONS ON NEURAL NETWORKS AND
LEARNING SYSTEMS, the IEEE TRANSACTIONS ON IMAGE PROCESSING,
the Journal of Machine Learning Research, the International Journal of
Computer Vision, NIPS, CIKM, ICML, CVPR, ICCV, ECCV, AISTATS,
ICDM, and ACM SIGKDD.

Dr. Tao is a fellow of OSA, IAPR, and SPIE. He received several best paper
awards, such as the best theory/algorithm paper runner up award in the IEEE
ICDM07, the Best Student Paper Award in the IEEE ICDM13, and the 2014
ICDM 10-year highest-impact paper award. He received the 2015 Australian
Scopus-Eureka Prize, the 2015 ACS Gold Disruptor Award, and the 2015
UTS Vice-Chancellors Medal for Exceptional Research.

\end{IEEEbiography}

\ \\
\ \\
\ \\
\ \\
\ \\
\ \\
\ \\
\ \\
\ \\
\ \\
\ \\
\ \\
\ \\
\ \\
\ \\
\ \\
\ \\
\ \\
\ \\
\ \\
\ \\
\ \\
\ \\
\ \\
\ \\
\ \\
\ \\
\ \\
\ \\
\ \\
\ \\
\ \\
\ \\
\ \\
\ \\
\ \\
%

%
%
%




\end{document}